\providecommand{\U}[1]{\protect\rule{.1in}{.1in}}
\providecommand{\U}[1]{\protect\rule{.1in}{.1in}}
\providecommand{\U}[1]{\protect\rule{.1in}{.1in}}
\def\comment#1{}
\newcommand{\PP}{ \mathbb{P} }
\newcommand{\bz}{ \mathbf{z} }
\newcommand{\bA}{ \mathbf{A} }
\newcommand{\ba}{ \mathbf{a} }
\newcommand{\bb}{ \mathbf{b} }
\newcommand{\bV}{ \mathbf{V} }
\newcommand{\bi}{\begin{itemize}}
\newcommand{\ei}{\end{itemize}}
\newcommand{\bc}{\begin{center}}
\newcommand{\ec}{\end{center}}
\def\citet{\citeN}
\def\bi{\begin{itemize}}
\def\ei{\end{itemize}}
\newtheorem{lemma}{Lemma}
\newtheorem{proposition}{Proposition}
\newtheorem{theorem}{Theorem}
\newtheorem{corol}{Corollary}
\begin{document}

\title{Understanding complex predictive models with Ghost Variables}
\author{Pedro Delicado\\{\small Departament d'Estad\'{\i}stica i Investigaci\'on Operativa} \\{\small Universitat Polit\`ecnica de Catalunya} \\\ \\Daniel Pe\~na \\{\small Departamento de Estad\'{\i}stica and Institute of Financial Big Data} \\{\small Universidad Carlos III de Madrid} }
\date{August 27, 2019} %\today}
\maketitle

\begin{abstract}
We propose a procedure for assigning a relevance measure to each 
explanatory variable in a complex predictive model. We assume that we have a
training set to fit the model and a test set to check the out of sample
performance. First, the individual relevance of each variable is computed by
comparing in the test set the predictions of  the model that includes all the variables, 
with those of another model in which the variable of interest is substituted by its ghost
variable, defined as the prediction of this variable by using the rest of
explanatory variables. Second, we check the joint relevance of the variables
by using the eigenvalues of a relevance matrix that is the covariance matrix
of the vectors of individual effects. It is shown that in simple models, as
linear or additive models, the proposed measures are related to standard
measures of significance of the variables and in neural networks models (and
in other algorithmic prediction models) the procedure provides information
about the joint and individual effects of the variables that is not usually
available by other methods. The procedure is illustrated with simulated examples and the
analysis of a large real data set.

\noindent\textbf{Keywords:} Explainable Artifical Inteligence; Interpretable machine
learning; Neural networks; Out-of-sample prediction; Partial correlation
matrix.
%; Regression.
%Big data; Eigenvectors; Influence matrix; Machine learning; Neural networks; Out-of-sample prediction; Partial correlation matrix; Regression; Variable selection.

\end{abstract}

\section{Introduction}

\label{sec:intro}

In a stimulating and provocative paper, \citeN{Breiman:2001} shook the
statistical community by arguing that traditional Statistics was no longer the
only way to reach conclusions from data. He noted that, in addition to the
\emph{Data Modeling Culture} (traditional Statistics), was emerging an
\emph{Algorithmic Modeling Culture}.
%Pedro: la frase siguiente puede suprimirse
At that time Breiman was almost surely thinking on Machine Learning, and now
we could also think in Data Science. The algorithms proposed by the new
culture (neural networks (NN), support vector machines, random forest, etc.)
have often better predictive accuracy than traditional statistical models
(linear regression, logistic regression, etc.). On the other hand, statistical
models explain better the relationship among response and input variables. In
fact, these new-culture algorithms usually are called \emph{black boxes.}

The need of understanding the variables effects increases with prediction
rules based on Big Data, that is, data sets with large number of variables,
$p,$ and observations, $n,$ and even with $p>n$, because: (1) in some models,
as NN, the effect of a variable is a non linear function of its weights in
different linear combinations, making its total effect difficult to see; (2)
the effect of a variable strongly correlated with others cannot be well
understood without considering their joint effects. The crucial importance of
understanding the effect of the variables on algorithm models has often been
recognized, for instance, \citeN{ribeiro2016should} state that \emph{a vital
concern remains: if the users do not trust a model or a prediction, they will
not use it.} There is a powerful research line on \emph{variable importance
measure in algorithmic models}, which main goal is to provide interpretability
tools lighting these black box models, that has been labeled as
\emph{eXplainable Artificial Intelligence} (XAI) (e.g.,
\shortciteNP{baehrens2010explain},
\citeNP{kononenko2010efficient},
\citeNP{Biran2017ExplanationAJ}, \citeNP{Nott:2017}, \citeNP{Biecek:2018},
\citeNP{Delicado:2019}, and \citeNP{Miller:2019})
or {\em Interpretable Machine Learning} (IML) (e.g., \shortciteNP{IML_package:2018}, \citeNP{Molnar:2019}, 
\shortciteNP{Murdoch_et_al:2019},
and \shortciteNP{VIP_package:2019}).
An often used approach in neural networks (NN) is to look at the derivatives of the prediction function
with respect to each variable. See \citeN{intrator2001interpreting},
%Intrator and Intrator (2001),
that introduced the generalized weights in NN, and
\shortciteN{montavon2018methods}
%Montavon et al (2018)
for a survey of this field. However, these procedures have limitations to show
the general pattern of interactions among variables.

We consider in this article the general framework of a prediction problem
involving the random vector $(X,Z,Y)$, $X\in\mathbb{R}^{p}$, $Z\in\mathbb{R}$
and $Y\in\mathbb{R}$, where $Y$ is the response variable that should be
predicted from $(X,Z)$. It is assumed that there exist a training sample of
size $n_{1}$ (used to fit the prediction rule) and a test sample of size
$n_{2}$ (used for checking the out of sample precision accuracy of the fitted
prediction rule). A simple approach to define the importance of the variable
$Z$ consists in measuring its contribution to the forecast of the response. To
compute this measure the model is fitted twice: first including both $X$ and
$Z$ and then excluding $Z$. This approach is often used to decide if the
variable $Z$ should be included in the model. However, other alternatives are
possible. 
Instead of deleting the variable $Z$, \citeN{Breiman:2001} proposed
to randomly permute its values in the test sample to create a new variable
$Z^{\prime}$
%$Z_{R}$
and compare the model predictions using the original $Z$ variable and the
randomized one $Z^{\prime}$.
%$Z_{R}$.
%Then, we fit
%$\widehat{\mathbf{y}}_{1.X,z}=\widehat{m}_{1}(\mathbf{X}_{1}\mathbf{,z}_{1})$
%with the training set and use this rule to make two set of forecast in the
%test set: with the values of the variable in the test set, and replacing it in
%the test sample by a random permutation of its values. Then we compare the two
%set of predictions.
This method has two main advantages over deleting the variable: (1) only one
predictive model has to be fitted (the one using all the explanatory
variables); 
and (2) the forecast comparison is made between two models with the same number of
variables. A drawback is that $Z^{\prime}$
%$Z_{R}$
is unrelated to $Y$ but also to $X$ and the joint effect of the $Z$ variable
with the $X$ will be lost.
It is worth mentioning that both methods, deleting $Z$ and replacing it by $Z'$,
are \emph{model agnostic} (\shortciteNP{ribeiro2016model}) because they can be applied to any predictive model,
which is a desirable property.
Despite its popularity, using random permutations for interpreting black box prediction algorithms has received numerous criticisms.  
See, for instance, \citeN{HookMent:2019:StopPermuting} who advocate against this practice. 

In this article we propose a third approach to measure the effect of a
variable that combines the advantages of deleting and randomizing, and at the same time avoids their drawbacks. 
We fit the model with all the original variables,
$X$ and $Z$,
%$\widehat{\mathbf{y}}_{1.X,z}=\widehat{m}_{1}(\mathbf{X}_{1}\mathbf{,z}_{1})$
in the training sample. Then we apply the model in the test sample twice:
first using the observed values of $Z$ and then using a new variable
$\widehat{Z}$
that is uncorrelated to the response given the variables $X$,
(as in the randomized method) but has the same relation with the $X$ variables
as the original $Z$ variable (opposite to the randomized variable). We call
this new variable a \emph{ghost variable}, and it can be computed as $\hat
{Z}=\widehat{\mathbb{E}(Z|X)}$, the estimated expected value of $Z$ given the
other explanatory variables $X$. 
Working with ghost variables has certain similarities with the use of conditional permutations, as proposed by \shortciteN{strobl2008conditional} for random trees, 
and with the simulation of data from the conditional distribution $Z|X$
suggested by \citeN{HookMent:2019:StopPermuting}.
Finally, we extend our proposals to measure the relationship among groups of variables, showing that analyzing the relationships between the effects of the variables, one by one, helps to identify the most important groups of variables in prediction.
%we can identify the more important groups of variables in prediction by analyzing the relationships between individual effects.

Our objective is to understand the effect of the variables in a complex
prediction model and not to select the variables to be included in a linear
model. However, both problems are related and some of 
our results could be useful for variable selection in regression. 
For instance, 
%Wu, Boos and Stefanski (2007) 
\citeN{wu2007controlling}
proposed controlling variable selection in regression by
adding pseudovariables, defined as variables unrelated to the response and
also to the explanatory variables. We also add ghost variables to the model but
they are very different to the pseudovariables. Both are unrelated to the
response but the pseudo variables are also unrelated to the explanatory variables
whereas the ghost variables 
keep the correlation structure of the explanatory variables. 
%Barber and Cand?s (2015, 2018) 
\citeANP{barber2015controlling} \citeyear{barber2015controlling,barber2019knockoff}
and 
%Candes et al (2019)
\shortciteN{candes2018panning}
introduced Knockoffs to control the false discovery rate in large regression
problems. Their knockoffs variables are all added jointly in the model
and have a different objective than our ghost variables. 
However,  both are unrelated to the response and try to keep the correlation structure of the explanatory variables.
In fact, the analysis that we propose can be useful as a screening device  to understand the
dependency among the variables in the
large problems considered by 
%Candes et al (2019)
\citeN{barber2019knockoff}.

The rest of the article is organized as follows. Section \ref{sec:VarRelPopul}
presents the problem of measuring the relevance of a variable in a prediction
problem at the population level. We review briefly the two main approaches
usually used, delete the variable or substitute it by a random permutation,
and proposed a new way: to substitute it by its ghost variable defined as its
expected value given the rest of the explanatory variables. Section
\ref{sec:VarRelSample} deals with the sample implementation of these variable
relevance measures, and compares them when they are applied to simple models,
as additive models or multiple linear regression. We show that they produce
reasonable results in these well-known settings and they are strongly related
to usual significance statistics. These support the generalization to more
complex models. Section \ref{sec:Rel.Matrix} defines the relevance
matrix as the covariance matrix of the individual effects and shows that the
eigenvectors of this matrix will have useful information about the joint
effects of the
variables.
It is also shown that, in linear regression, this matrix is closely related with the partial correlation matrix.
Section \ref{sec:examples} illustrates the properties of the relevance matrix in simulated examples and analyzing a real data set.
Section \ref{sec:concl} concludes with some suggestions for further research.
The proofs of the results in Sections \ref{sec:VarRelSample} and
\ref{sec:Rel.Matrix} are deferred to the appendixes. An Online Supplement
includes a part of the output corresponding to the real data example.

\textbf{Remark}. The term ghost variable has been used in computer science to
refer to auxiliary variables that appear in a program to facilitate
specification and verification, for instance, to count the number of
iterations of a loop. Once the program is running correctly they can be
eliminated from the program.\ We believe that this name has not been used
before in Statistics although, of course, the missing value approach to
identify the effect of data points or variables has a long and successful
tradition to identify outliers, influential observations and sensitive points 
in time series and multivariate analysis.

\section{Relevance for random variables}

\label{sec:VarRelPopul} Let us consider the prediction problem involving the
random vector $(X,Z,Y)$, $X\in\mathbb{R}^{p}$, $Z\in\mathbb{R}$ and
$Y\in\mathbb{R}$, where $Y$ is the response variable that should be predicted
from $(X,Z)$. A \emph{prediction function} $f:\mathbb{R}^{p+1}\rightarrow
\mathbb{R}$ has \emph{expected loss} (or \emph{risk}) $R(f(X,Z),Y)=\mathbb{E}%
(L(f(X,Z),Y))$, where $L:\mathbb{R}\times\mathbb{R}\rightarrow\mathbb{R}^{+}$
is a loss function measuring the \emph{cost} associated with predicting $Y$ by
$f(X,Z)$, in particular quadratic loss is $L(y,y^{\prime})=(y-y^{\prime})^{2}$.

We consider the problem of measuring the effect of $Z$ on the $f$ when
predicting $Y$ by $f(X,Z)$. Also, we can compute a reduced version of $f$, say
$f_{p}(X).$ A usual way to check for the relevance of the variable $Z$ is to
compute $R(f_{p}(X),Y)-R(f(X,Z),Y).$ An alternative way is to compute the
change in the prediction function $\mathbb{E}(L(f(X,Z),f_{p}(X))).$ These two
relevance measures do not necessarily coincide, but they do in a relevant
case: under quadratic loss and
assuming $Y=f(X,Z)+\varepsilon$, as
\[
R(f_{p}(X),Y)=\mathbb{E}((Y-f_{p}(X))^{2})=\mathbb{E}%
(\{(Y-f(X,Z))+(f(X,Z)-f_{p}(X))\}^{2})=
\]%
\[
R(f(X,Z),Y)+\mathbb{E}(L(f(X,Z),f_{p}(X))).
\]
%some special cases (see Section \ref{sec:suppress_variable}).
Both measures evaluate the effect of the single variable $Z$ by
\emph{suppressing} it from the prediction function $f$. We choose to use
$\mathbb{E}(L(f(X,Z),f_{p}(X)))$ because it does not depend on the
noisy part of the response variable $Y$.
%, but only on the prediction function.
%The disadvantage is that we do not know if the change measured in the prediction function increases or decreases the expected loss.

An alternative approach, that does not require $f_{p}$, is to replace $Z$ by
an independent copy $Z^{\prime}$: a random variable with the same marginal
distribution as $Z$ but independent from $(X,Y),$ and define the relevance of
$Z$ by
%$R(f(X,Z^{\prime}),Y)-R(f(X,Z),Y),$ or
$\mathbb{E}(L(f(X,Z),f(X,Z^{\prime})))$. 
However, this approach has some
limitations. Consider the case of $f$ being additive in $X$ and $Z$:
$f(X,Z)=s_{1}(X)+s_{2}(Z)$, with $\mathbb{E}(s_{2}(Z))=0$. Under quadratic
loss,
\[
\mathbb{E}(L(f(X,Z),f(X,Z^{\prime})))=\mathbb{E}\left\{  (s_{1}(X)+s_{2}%
(Z))-(s_{1}(X)+s_{2}(Z^{\prime}))\right\}  ^{2}=2\mbox{Var}(s_{2}(Z)).
\]
If additional linearity happens, $s_{2}(Z)=Z\beta_{z}$, then this relevance is
$2\beta_{z}^{2}\mbox{Var}(Z)$. 
These results are similar to those obtained by \shortciteN{zhu2015reinforcement} or \shortciteN{Gregorutti_et_al:2017}.
At a first glance these relevance measures
$(2\mbox{Var}(s_{2}(Z))$ or $2\beta_{z}^{2}\mbox{Var}(Z)$) seem to be
suitable, but:

\begin{itemize}
\item[(1)] The relevance of $Z$ would be the same in one case where $X$ and
$Z$ are independent (so $Z$ encode exclusive information about $Y$) and in
another case where $X$ and $Z$ are strongly related (in such a case $X$ could
make up for the absence of $Z$). Clearly $Z$ is more relevant in the first
case than in the second one.

\item[(2)] The replacement of $Z$ by an independent copy $Z^{\prime}$ implies
an alteration of the prediction function $f(X,Z)$. Consider again the simple
case of the linear predictor $f(X,Z)=\beta_{0}+X^{T}\beta_{x}+Z\beta_{z}$.
Replacing $Z$ by $Z^{\prime}$ is equivalent to using the following reduced
version of $f$:
\[
f_{p}(X)=\beta_{0}^{\prime}+X^{T}\beta_{x}+\nu,
\]
where $\beta_{0}^{\prime}=\beta_{0}+\beta_{z}\mathbb{E}(Z)$ and $\nu=\beta
_{z}(Z^{\prime}-\mathbb{E}(Z))$, a zero mean random variable independent from
$(X,Y)$ that does not contribute in any way to the prediction of $Y$. A better
alternative would be to use the reduced version of $f$ given just by
$\beta_{0}^{\prime}+X^{T}\beta_{x}$, that is equivalent to replacing $Z$ by
$\mathbb{E}(Z)$ in $f(X,Z)$.
\end{itemize}

A more useful replacement for $Z$ is $\mathbb{E}(Z|X)$, the ghost variable for
$Z$, that is the best prediction of $Z$ as a function of $X$ according to
quadratic loss. Note that if there is dependency between $X$ and $Z$,
$|Z-\mathbb{E}(Z|X)|$ is expected to be lower than $|Z-\mathbb{E}(Z)|$, so
$f(X,\mathbb{E}(Z|X))$ is expected to be closer to $f(X,Z)$ than
$f(X,\mathbb{E}(Z))$.
%, under continuity of $f(x,z)$.
Therefore, when $Z$ is not available, replacing it by $\mathbb{E}(Z|X)$ allows
$X$ to contribute a little bit more in the prediction of $Y$ than replacing
$Z$ by $\mathbb{E}(Z)$. The larger is this extra contribution of $X $, the
smaller is the relevance of $Z$ in the prediction of $Y$, that could be
measured by $\mathbb{E}(L(f(X,Z),f(X,\mathbb{E}(Z|X))))$.
%\[
%\rel_5(Z;f,X,Y)= R(f;X,\E(Z|X),Y) - R(f;X,Z,Y)
%\]
%and
%\[
%\rel_6(Z;f,X)=\E( L(f(X,Z), f(X,\E(Z|X))) ).
%\]%
%\[
%%R(f(X,\mathbb{E}(Z|X)),Y)-R(f(X,Z),Y)\mbox{ \ or \ }
%\mathbb{E}(L(f(X,Z),f(X,\mathbb{E}(Z|X)))).
%\]
Observe that replacing $Z$ by $\mathbb{E}(Z|X)$ is equivalent to consider the
reduced version $f_{p}(X)$ of $f(X,Z)$, where the function $f_{p}%
(x)=f(x,\mathbb{E}(Z|X=x))$. It should be noted that $f(X,\mathbb{E}%
(Z|X))=\mathbb{E}(f(X,Z)|X)$ for linear functions $f(X,Z)$. It follows that
removing $Z$ or replacing it by $\mathbb{E}(Z|X)$ are equivalent in multiple
linear regression.

Under quadratic loss, if $f$ is additive in $X$ and $Z$
\[
\mathbb{E}(L(f(X,Z),f(X,\mathbb{E}(Z|X))))=\mathbb{E}((s_{2}(Z)-s_{2}%
(\mathbb{E}(Z|X)))^{2}).
\]
If, additionally, $s_{2}(Z)=Z\beta_{z}$,
\[
\mathbb{E}(L(f(X,Z),f(X,\mathbb{E}(Z|X))))=\beta_{z}^{2}\mathbb{E}%
((Z-\mathbb{E}(Z|X))^{2})=
%\]%
%\[
\beta_{z}^{2}\mathbb{E}(\mbox{Var}(Z|X))=\mathbb{E}(\mbox{Var}(s_{2}(Z)|X)).
\]
Observe that $\mathbb{E}(\mbox{Var}(s_{2}(Z)|X))$ and $\beta_{z}^{2}%
\mathbb{E}(\mbox{Var}(Z|X))$ coincide, respectively, with $\mbox{Var}(s_{2}%
(Z))$ and $\beta_{z}^{2}\mbox{Var}(Z)$ when $X$ and $Z$ are independent, but
otherwise the first two would be preferred to the second ones as relevance
measures of $Z$.
%Section %\ref{sec:ghost_variable}
%\ref{sec:Rel.Gh.lin.mod}
%goes deeper into replacing $Z$ by
%$\mathbb{E}(Z|X)$ and shows that, in multiple linear regression analysis, it
%is closely related to suppressing variable $Z$.

The following Section \ref{sec:VarRelSample}
%, \ref{sec:random_copy} and \ref{sec:ghost_variable}
deals with the practical implementation of the three approaches introduced so
far for measuring the relevance of a variable: suppressing it, replacing it by
an independent copy, or by a ghost variable, respectively. Particular
attention is paid to the additive and linear prediction models, and quadratic
loss is assumed.
Let $m(x,z)=\mathbb{E}(Y|X=x,Z=z)$ be the \emph{regression function} of $Y$ on $(X,Z)$,
the best prediction function for $Y$ under quadratic loss.
Any prediction function of $Y$, $f(x,z)$, is also an estimator $\hat{m}(x,z)$ of the
regression function $m(x,z)$, and vice versa.
So from now on we will talk indistinctly of prediction functions or regression function estimators.

\section{Relevance in data sets}

\label{sec:VarRelSample}
%{sec:suppress_variable}

Consider a training sample of $n_{1}$ independent realizations of $(X,Z,Y)$,
$\mathcal{S}_{1}=\left\{  (x_{1.i},z_{1.i},y_{1.i}),\right.$
$\left.i=1,\ldots,n_{1}\right\}$. 
Let $(\mathbf{X}_{1},\mathbf{z}_{1},\mathbf{y}_{1})$, with $\mathbf{X}%
_{1}\in\mathbb{R}^{n_{1}\times p}$, $\mathbf{z}_{1}\in\mathbb{R}^{n_{1}%
\times1}$ and $\mathbf{y}_{1}\in\mathbb{R}^{n_{1}\times1}$, be the matrix
representation of the training sample, that is used to estimate the regression
function $m(x,z)$ by $\hat{m}(x,z)$.
The expected quadratic loss of $\hat{m}$ (\emph{Mean Square Prediction Error}) is
\[
\text{MSPE}(\hat{m})=\mathbb{E}\left(  (Y-\hat{m}(X,Z))^{2}\right)  ,
\]
where $(X,Z,Y)$ is independent from the training sample. In order to estimate
$\text{MSPE}(\hat{m})$, a test sample is observed: $n_{2}$ independent
realizations of $(X,Z,Y)$, 
$\mathcal{S}_{2}=\left\{  (x_{2.i},z_{2.i},y_{2.i}),\right.$ $\left.i=1,\ldots,n_{2}\right\}$, 
that are also independent from the
training sample. Let $(\mathbf{X}_{2},\mathbf{z}_{2},\mathbf{y}_{2})$, be the
test sample in matrix format, with $\mathbf{X}_{2}\in\mathbb{R}^{n_{2}\times
p}$, $\mathbf{z}_{2}\in\mathbb{R}^{n_{2}\times1}$ and $\mathbf{y}_{2}%
\in\mathbb{R}^{n_{2}\times1}$, and let $\mathbf{z}_{2}^{\prime}\in
\mathbb{R}^{n_{2}\times1}$ be a random permutation of the elements of the
column vector $\mathbf{z}_{2},$ with $i$-th element $z_{2.i}^{\prime}$.
%be the $i$-th element of $\mathbf{z}_{2}^{\prime}$.
%
The estimation of $\text{MSPE}(\hat{m})$ from the test sample is
\[
\widehat{\text{MSPE}}(\hat{m})=\frac{1}{n_{2}}\sum_{i=1}^{n_{2}}(y_{2.i}%
-\hat{m}(x_{2.i},z_{2.i}))^{2}.
\]
This is, under mild conditions, a consistent and unbiased estimator of
$\text{MSPE}(\hat{m})$.

In a similar way, the test sample is used to obtain the sampling versions of
the three variable relevance measures introduced in Section
\ref{sec:VarRelPopul}, by
\[
\mbox{Rel}_{V}(Z)=\frac{1}{n_{2}}\sum_{i=1}^{n_{2}}(\hat{m}(x_{2.i}%
,z_{2.i})-\hat{m}_{V}(i))^{2},
\]
where for $V=Om$, relevance for omission, $\hat{m}_{V}(i)=\hat{m}_{p}%
(x_{2.i}),$ for $V=RP,$ relevance for random permutation, $\hat{m}_{V}%
(i)=\hat{m}(x_{2.i},z_{2.i}^{\prime})$ and for $V=Gh$, 
relevance for ghost variable, $\hat{m}_{V}(i)=$ $\hat{m}(x_{2.i},\widehat{z}_{2.i})$ where
$\widehat{z}_{2.i}=\hat{\mathbb{E}}(Z|X=x_{2.i}).$ Scaled versions of these
relevance measures can be defined dividing them by an estimation of the
residual variance ($\widehat{\text{MSPE}}(\hat{m})$, for instance), as in
\citeN{Breiman:2001}. We present the results for the unscaled forms to
simplify the presentation.

Now we proceed to explore the relevance measures when they are computed in the
multiple linear regression model (and, in some cases, in the additive model).

\subsection{Relevance by omission in linear regression}

\label{sec:Rel.Om.lin.mod}

Suppose zero mean variables and the linear regression 
$m(x,z)=x^{T}\beta_{x}+z\beta_{z}$, for which the natural reduced version is $m_{p}(x)=x^{T}\beta_{0}$. 
Let $(\hat{\beta}_{x}%
,\hat{\beta}_{z})$ and $\hat{\beta}_{0}$ be, respectively, the ordinary least
squares (OLS) estimated coefficients of both models, and $\hat{\mathbf{y}%
}_{1.X.z}=\mathbf{X}_{1}\hat{\beta}_{x}+\mathbf{z}_{1}\hat{\beta}_{z}$,
$\hat{\mathbf{y}}_{1.X}=\mathbf{X}_{1}\hat{\beta}_{0}$. Let $\mbox{se}(\hat
{\beta}_{z})$ be the estimated standard error of $\hat{\beta}_{z}$ and let
$t_{z}=\hat{\beta}_{z}/\mbox{se}(\hat{\beta}_{z})$ the standard $t$-test
statistic for the null hypothesis $H_{0}:\beta_{z}=0$. Let $F_{z}=t_{z}^{2}$
be the $F$-test statistic for the same null hypothesis.
Standard computations in the linear model 
(see, e.g., \citeNP{seber2003linear}, 
or check the online supplement, where we have included them 
for the sake of completeness) 
lead to establish the identity
\[
\frac{n_{1}}{\hat{\sigma}^{2}}
\mbox{Rel}_{\mbox{\scriptsize Om}}^{\mbox{\scriptsize Train}}(Z)=F_{z},
\]
where 
\[
\mbox{\rm Rel}_{\mbox{\scriptsize\rm Om}}^{\mbox{\scriptsize\rm Train}}%
(Z)=\frac{1}{n_{1}}(\hat{\mathbf{y}}_{1.X.z}-\hat{\mathbf{y}}_{1.X})^{T}%
(\hat{\mathbf{y}}_{1.X.z}-\hat{\mathbf{y}}_{1.X})
\]
is the relevance by omission of the variable $Z$, evaluated in the training
sample, and 
$\hat{\sigma}^{2}=(\mathbf{y}_{1}-\hat{\mathbf{y}}_{1.X.z})^{T}
(\mathbf{y}_{1}-\hat{\mathbf{y}}_{1.X.z})/(n_{1}-p-1)$.
That is, measuring the relevance of a variable by its omission in the training sample is
equivalent to the standard practice of testing its significance.

Moreover, the same computations indicate that 
$\mbox{Rel}_{\mbox{\scriptsize Om}}^{\mbox{\scriptsize Train}}(Z)=
\hat{\beta}_{z}^{2}\hat{\sigma}_{z.x,n_{1}}^{2}$,
where $\hat{\sigma}_{z.x,n_{1}}^{2}$ is a consistent estimator of
$\sigma_{z.x}^{2}$, the residual variance in the model $Z=X^{T}\alpha
+\varepsilon_{z}$, computed from the training sample.
This is a sampling version of the expression $\beta_{z}^{2}\mathbb{E}(\mbox{Var}(Z|X))$ that we obtained in Section \ref{sec:VarRelPopul} as the value for the relevance by omission or by
ghost variables at population level (we saw there that both coincide for linear models).

When using the test sample to compute the relevance by omission we obtain similar results.
The vectors of predicted values in the test sample are 
$\hat{\mathbf{y}}_{2.X.z}=\mathbf{X}_{2}\hat{\beta}_{x}+\mathbf{z}_{2} \hat{\beta}_{z}$ and
$\hat{\mathbf{y}}_{2.X}=\mathbf{X}_{2}\hat{\beta}_{0}$. Therefore, the
\emph{relevance by omission} of the variable $Z$ is
\[
\mbox{Rel}_{\mbox{\scriptsize Om}}(Z)=\frac{1}{n_{2}}(\hat{\mathbf{y}}%
_{2.X.z}-\hat{\mathbf{y}}_{2.X})^{T}(\hat{\mathbf{y}}_{2.X.z}-\hat{\mathbf{y}%
}_{2.X}).
\]
Then it can be proved (detailed computation can be found in the online supplement) that 
\[
\frac{n_{1}}{\hat{\sigma}^{2}}\mbox{\rm Rel}_{\mbox{\scriptsize\rm Om}}%
(Z)=F_{z} \, \frac{\hat{\sigma}^{2}_{z.x,n_{1},n_{2}}}{\hat{\sigma}%
^{2}_{z.x,n_{1}}}= F_{z} \left(  1 + O_{p}\left(  \min\{n_{1},n_{2}%
\}^{-1/2}\right)  \right)  ,
\]
and
\[
\mbox{Rel}_{\mbox{\scriptsize Om}}(Z)= \hat{\beta}_{z}^{2} \hat{\sigma}%
^{2}_{z.x,n_{1},n_{2}},
\]
where $\hat{\sigma}^{2}_{z.x,n_{1},n_{2}}$ and $\hat{\sigma}^{2}_{z.x,n_{1}}$
are consistent estimators of the same parameter $\sigma^{2}_{z.x}$, the
residual variance in the linear regression model $Z=X^{T}\alpha+\varepsilon
_{z}$.
%, the first one depending on both, the training sample and the test sample, 
%and the second one (also appearing in Theorem \ref{th:Rel_Om_Train}) only on the training sample.
It follows that, for large values of $n_{1}$ and $n_{2}$, $\hat{\sigma
}_{z.x,n_{1},n_{2}}^{2}/\hat{\sigma}_{z.x,n_{1}}^{2}\approx 1$, and then
\[
\frac{n_{1}}{\hat{\sigma}^{2}}\mbox{Rel}_{\mbox{\scriptsize Om}}(Z)\approx
F_{z},
\]
approximately the same relationship we have found when computing the relevance
by omission in the training sample. 

%\section{Variable relevance by Breiman's permutation approach}
%\label{sec:random_copy}

\subsection{\normalsize Relevance by random permutations in the linear and additive models}

\label{sec:random_copy_add_mod}

%Instead of deleting the observations of the variable $Z$, \citeN{Breiman:2001}
%proposed to replace them by a random permutation $Z^{\prime}$.
%\citeN{Breiman:2001} defined the
%\emph{importance} of variable $Z$ for quadratic loss as
%\[
%\mbox{Imp}(Z)= 100 \frac{\widehat{\text{MSPE}}_{{\scriptsize {\mbox{RP}}.z}%
%}(\hat{m})-\widehat{\text{MSPE}}(\hat{m})}{\widehat{\text{MSPE}}%
%_{{\scriptsize {\mbox{RP}}.z}}(\hat{m})},
%\]
%where
%\[
%\widehat{\text{MSPE}}(\hat{m})= \frac{1}{n_{2}}\sum_{i=1}^{n_{2}}(y_{2.i}%
%-\hat{m}(x_{2.i},z_{2.i}))^{2},
%\]
%\[
%\widehat{\text{MSPE}}_{{\scriptsize {\mbox{RP}}.z}}(\hat{m})= \frac{1}{n_{2}%
%}\sum_{i=1}^{n_{2}}(y_{2.i}-\hat{m}(x_{2.i},z^{\prime}_{2.i}))^{2}.
%\]
%So $\mbox{Imp}(Z)$ is the percent increase in estimated risk due permuting the
%values of $Z$. We prefer to use a related quantity, the
%\emph{relevance by random permutation} of the variable $Z$:
%\[
%\mbox{Rel}_{\mbox{\scriptsize RP}}(Z)=\widehat{ \mathbb{E} }\left(  (\hat
%{m}(X,Z) - \hat{m}(X,Z^{\prime2}\right)  = \frac{1}{n_{2}} \sum_{i=1}^{n_{2}}
%(\hat{m}(x_{2.i},z_{2.i}) - \hat{m}(x_{2.i},z^{\prime}_{2.i}))^{2}.
%\]

The use of random permutations implies independence 
(conditional to the observed test sample)
of the replacing variable
$z_{2,i}^{\prime}$ and the other explanatory variables $x_{2,i}$. We will
analyze directly the additive model and show the results for linear regression
as a particular case. Assume that an \emph{additive model} is fitted in the
training sample%
\[
\hat{m}(x,z)=\hat{\beta}_{0}+\sum_{j=1}^{p}\hat{s}_{j}(x_{j})+\hat{s}%
_{p+1}(z),
\]
$\hat{\beta}_{0}=\sum_{i=1}^{n_{1}}y_{1.i}/n_{1}$, 
$\sum_{i=1}^{n_{1}}\hat{s}_{j}(x_{1.i})/n_{1}=0,\,j=1,\ldots,p$, and
$\sum_{i=1}^{n_{1}}\hat{s}_{p+1}(z_{1.i})/n_{1}=0$ for identifiability
reasons. These identities are only approximately true when taking averages at
the test sample.
%Analogously,
%$\widehat{\text{MSPE}}_{{\scriptsize {\mbox{RP}}.z}}(\hat{m})-\widehat{\text{MSPE}}(\hat{m})$, the
%numerator of $\mbox{Imp}(Z)$, is only approximately equal to
%$\mbox{Rel}_{\mbox{\scriptsize RP}}(Z)$.
%%They would be equal if they were computed at the training sample.
Observe that
\[
\mbox{Rel}_{\mbox{\scriptsize RP}}(Z)=\frac{1}{n_{2}}\sum_{i=1}^{n_{2}}%
(\hat{s}_{p+1}(z_{2.i})-\hat{s}_{p+1}(z_{2.i}^{\prime}))^{2}=
\]%
\[
2\frac{1}{n_{2}}\sum_{i=1}^{n_{2}}\hat{s}_{p+1}(z_{2.i})^{2}-2\frac{1}{n_{2}%
}\sum_{i=1}^{n_{2}}\hat{s}_{p+1}(z_{2.i})\hat{s}_{p+1}(z_{2.i}^{\prime
})\approx
\]%
\[
2\frac{1}{n_{2}}\sum_{i=1}^{n_{2}}\hat{s}_{p+1}(z_{2.i})^{2}%
=2\widehat{\mbox{Var}}(\hat{s}_{p+1}(Z)).
\]
The approximation follows from the fact that $\sum_{i=1}^{n_{2}}\hat{s}%
_{p+1}(z_{2.i})\hat{s}_{p+1}(z_{2.i}^{\prime})/n_{2}$ has expected value over
random permutations equal to $\{\sum_{i=1}^{n_{2}}\hat{s}_{p+1}(z_{2.i}%
)/n_{2}\}^{2}\approx0$ and variance of order $O(1/n_{2})$.

In the special case of linear regression, $\hat{s}_{j}(x_{j})=\hat{\beta}%
_{j}(x_{j}-\bar{x}_{1.j})$ for all $j=1,\ldots,p$, and $\hat{s}_{p+1}%
(z)=\hat{\beta}_{z}(z-\bar{z}_{1})$, where $\bar{x}_{1.j}=\sum_{i=1}^{n_{1}%
}x_{1.ij}/n_{1}$ and $\bar{z}_{1}=\sum_{i=1}^{n_{1}}z_{1.i}/n_{1}$. Then,
in this case
\[
\mbox{Rel}_{\mbox{\scriptsize RP}}(Z)\approx2\hat{\beta}_{z}^{2}\frac{1}%
{n_{2}}\sum_{i=1}^{n_{2}}(z_{2.i}-\bar{z}_{1})^{2}\approx2\hat{\beta}_{z}%
^{2}\frac{1}{n_{2}}\sum_{i=1}^{n_{2}}(z_{2.i}-\bar{z}_{2})^{2}=2\hat{\beta
}_{z}^{2}\widehat{\mbox{Var}}(Z).
\]
Observe that $\mbox{Rel}_{\mbox{\scriptsize RP}}(Z)$ is not a multiple of the
statistic $F_{z}$ used to test the null hypothesis $H_{0}:\beta_{z}=0$,
because the variance of $\hat{\beta}_{z}$, the OLS estimator of $\beta_{z}$,
is not a multiple of $1/\widehat{\mbox{Var}}(Z)$, except in the particular
case that $\mathbf{X}_{1}$ and $\mathbf{z}_{1}$ are uncorrelated.

%\section{Variable relevance by a ghost variable}
%\label{sec:ghost_variable}

\subsection{Relevance by ghost variables in linear regression}

\label{sec:Rel.Gh.lin.mod}

%As in Sections \ref{sec:Rel.Om.lin.mod} and \ref{sec:random_copy_add_mod},
%%\ref{sec:suppress_variable} and \ref{sec:random_copy},
%we assume that the regression function of $Y$ on $(X,Z)$ has been estimated from a
%training sample $\mathcal{S}_{1}$ of size $n_{1}$. Let $\hat{m}(x,z)$ be such
%estimator. We also assume that a test sample $\mathcal{S}_{2}$ of size $n_{2}$
%is available.
To get a model agnostic proposal, the training sample $\mathcal{S}_{1}$ should
be used only through the estimated prediction function. Therefore we propose
to estimate $\mathbb{E}(Z|X)$ using the data in the test sample $\mathcal{S}%
_{2}.$ Suppose we fit a simple prediction model (for instance, a multiple
linear regression or an additive model) $\hat{z}_{2.X.i}=\hat{m}_{z}(x_{2.i}%
)$, $i=1,\ldots,n_{2}.$ Let us assume that the regression function is linear
and that it is estimated by OLS in the training sample $(\mathbf{X}%
_{1},\mathbf{z}_{1},\mathbf{y}_{1})\in\mathbb{R}^{n_{1}\times(p+2)} $. Let
$\hat{\beta}_{x}$ and $\hat{\beta}_{z}$ be the estimated coefficients, and let
$\hat{\sigma}^{2}$ be the estimated residual variance. Let $(\mathbf{X}%
_{2},\mathbf{z}_{2},\mathbf{y}_{2})\in\mathbb{R}^{n_{2}\times(p+2)}$ be the
test sample in matrix format. 
We fit by OLS in the test sample to obtain the ghost values for $\bz_2$:
\[
\hat{\mathbf{z}}_{2.2}=\mathbf{X}_{2}\hat{\alpha}_{2},
\]
with $\hat{\alpha}_{2}=(\mathbf{X}_{2}^{T}\mathbf{X}_{2})^{-1}\mathbf{X}%
_{2}^{T}\mathbf{z}_{2}$. Then,
\[
\hat{\mathbf{y}}_{2.X.z}=\mathbf{X}_{2}\hat{\beta}_{x}+\mathbf{z}_{2}%
\hat{\beta}_{z},
\]
when using $(X,Z)$ as predictors, and
\[
\hat{\mathbf{y}}_{2.X.\hat{z}}=\mathbf{X}_{2}\hat{\beta}_{x}+\hat{\mathbf{z}%
}_{2.2}\hat{\beta}_{z},
\]
when using $(X,\hat{Z}_{X})$, that is replacing $Z$ by the ghost variable.
Therefore, the \emph{relevance by a ghost variable} of the variable $Z$ is
\[
\mbox{Rel}_{\mbox{\scriptsize Gh}}(Z)=\frac{1}{n_{2}}(\hat{\mathbf{y}}%
_{2.X.z}-\hat{\mathbf{y}}_{2.X.\hat{z}})^{T}(\hat{\mathbf{y}}_{2.X.z}%
-\hat{\mathbf{y}}_{2.X.\hat{z}}).
\]
The following result connects this relevance measure with the $F$-test for
$\beta_{z}$ The proof can be found in Appendix \ref{Ap:Proof_Rel_Gh}.

\begin{theorem}
\label{th:Rel_Gh} Assume that the regression function of $Y$ over $(X,Z)$ is
linear, that it is estimated by OLS, and that the ghost variable for $Z$ is
also estimated by OLS. Then
\[
\frac{n_{1}}{\hat{\sigma}^{2}}\mbox{\rm Rel}_{\mbox{\scriptsize\rm Gh}}(Z)=
F_{z} \, \frac{\hat{\sigma}^{2}_{z.x,n_{2}}}{\hat{\sigma}^{2}_{z.x,n_{1}}} =
F_{z} \left(  1 + O_{p}\left(  \min\{n_{1},n_{2}\}^{-1/2}\right)  \right)  ,
\]
and
\[
\mbox{\rm Rel}_{\mbox{\scriptsize\rm Gh}}(Z)= \hat{\beta}_{z}^{2} \hat{\sigma
}^{2}_{z.x,n_{2}},
\]
where $\hat{\sigma}^{2}_{z.x,n_{2}}$ and $\hat{\sigma}^{2}_{z.x,n_{1}}$ are
consistent estimators of the same parameter $\sigma^{2}_{z.x}$ (the residual
variance in the linear regression model $Z=X^{T}\alpha+\varepsilon_{z}$), the
first one depending on the test sample, and the second one 
%(also appearing in Theorem \ref{th:Rel_Om_Train}) 
on the training sample.
\end{theorem}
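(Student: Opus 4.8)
The plan is to reduce $\RelGh(Z)$ to a transparent algebraic form and then match it against $F_z$ through the textbook expression for the variance of $\hat\beta_z$, leaving a single probabilistic step at the end.

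First I would exploit that the two test-sample fits share the term $\mathbf{X}_2\hat\beta_x$, so that
\[
\hat{\mathbf{y}}_{2.X.z}-\hat{\mathbf{y}}_{2.X.\hat z}=\hat\beta_z\,(\mathbf{z}_2-\hat{\mathbf{z}}_{2.2}),
\]
i.e.\ $\hat\beta_z$ times the residual vector of the OLS regression of $\mathbf{z}_2$ on $\mathbf{X}_2$ inside the test sample. Squaring and averaging gives at once
\[
\RelGh(Z)=\hat\beta_z^{2}\,\frac{1}{n_2}(\mathbf{z}_2-\hat{\mathbf{z}}_{2.2})^{T}(\mathbf{z}_2-\hat{\mathbf{z}}_{2.2})=\hat\beta_z^{2}\,\hat\sigma^{2}_{z.x,n_2},
\]
where $\hat\sigma^{2}_{z.x,n_2}=\|\mathbf{z}_2-\hat{\mathbf{z}}_{2.2}\|^{2}/n_2$ is the test-sample residual variance of $Z$ regressed on $X$. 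This already proves the second displayed identity and pins down the correct estimator $\hat\sigma^{2}_{z.x,n_2}$.

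Next I would recover $F_z$ from the training fit. By the Frisch--Waugh--Lovell theorem the full-model coefficient $\hat\beta_z$ coincides with the slope of $\mathbf{y}_1$ regressed on the training residuals $\mathbf{z}_1-\hat{\mathbf{z}}_{1.1}$ (where $\hat{\mathbf{z}}_{1.1}$ denotes the OLS fit of $\mathbf{z}_1$ on $\mathbf{X}_1$), so that its estimated variance is $\mbox{se}(\hat\beta_z)^{2}=\hat\sigma^{2}/\|\mathbf{z}_1-\hat{\mathbf{z}}_{1.1}\|^{2}=\hat\sigma^{2}/(n_1\hat\sigma^{2}_{z.x,n_1})$, with $\hat\sigma^{2}_{z.x,n_1}=\|\mathbf{z}_1-\hat{\mathbf{z}}_{1.1}\|^{2}/n_1$. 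Hence $F_z=\hat\beta_z^{2}/\mbox{se}(\hat\beta_z)^{2}=n_1\hat\beta_z^{2}\hat\sigma^{2}_{z.x,n_1}/\hat\sigma^{2}$. Substituting the formula for $\RelGh(Z)$ found above yields the exact identity
\[
\frac{n_1}{\hat\sigma^{2}}\RelGh(Z)=\frac{n_1\hat\beta_z^{2}\hat\sigma^{2}_{z.x,n_2}}{\hat\sigma^{2}}=F_z\,\frac{\hat\sigma^{2}_{z.x,n_2}}{\hat\sigma^{2}_{z.x,n_1}},
\]
the division by $\hat\sigma^{2}_{z.x,n_1}$ being exactly what turns the training-sample quantity hidden inside $F_z$ into the test-sample quantity carried by the relevance. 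I would keep careful track of the $1/n$ versus $1/(n-p-1)$ normalizations: with the plain $1/n$ divisors in the residual variances the identity is exact (the common factor $\hat\sigma^{2}$ cancels regardless of its own divisor), and any degrees-of-freedom correction is harmless because it is absorbed into the lower-order term established below.

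The only genuinely stochastic step, and the place where I expect the bookkeeping to be most delicate, is showing that the correction factor equals $1+O_p(\min\{n_1,n_2\}^{-1/2})$. Both $\hat\sigma^{2}_{z.x,n_1}$ and $\hat\sigma^{2}_{z.x,n_2}$ are OLS residual-variance estimators of the same parameter $\sigma^{2}_{z.x}$ computed on independent samples, so under standard moment conditions a central limit argument gives $\hat\sigma^{2}_{z.x,n_j}-\sigma^{2}_{z.x}=O_p(n_j^{-1/2})$ for $j=1,2$. Writing their difference as $O_p(n_1^{-1/2})+O_p(n_2^{-1/2})=O_p(\min\{n_1,n_2\}^{-1/2})$ and dividing by $\hat\sigma^{2}_{z.x,n_1}$, which converges in probability to $\sigma^{2}_{z.x}>0$, Slutsky's theorem delivers the stated order and hence the approximation $\frac{n_1}{\hat\sigma^{2}}\RelGh(Z)\approx F_z$. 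The conceptual content is thus entirely in the first algebraic reduction; the remaining work is aligning normalizing constants and invoking consistency of two elementary variance estimators.
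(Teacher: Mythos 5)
Your proposal is correct and follows essentially the same route as the paper's proof: the decomposition $\hat{\mathbf{y}}_{2.X.z}-\hat{\mathbf{y}}_{2.X.\hat z}=\hat\beta_z(\mathbf{z}_2-\hat{\mathbf{z}}_{2.2})$, the identification $\mbox{Rel}_{\mbox{\scriptsize Gh}}(Z)=\hat\beta_z^2\,k_{\mbox{\scriptsize Gh}}/n_2$, and the substitution $F_z=\hat\beta_z^2 k/\hat\sigma^2$ are exactly the paper's steps, with your appeal to Frisch--Waugh--Lovell playing the role of the paper's explicit block-matrix inversion in Appendix A.1. The only cosmetic difference is that you spell out the final $O_p$ bound via the CLT and Slutsky where the paper simply invokes ``standard arguments for the limit of a quotient.''
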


%It follows that for large values of $n_{1}$ and $n_{2}$
%%$(k_{\mbox{\scriptsize Gh}}/n_{2})(k/n_{1})\approx1$ and
%\[
%\frac{n_{1} \mbox{Rel}_{\mbox{\scriptsize Gh}}(Z)}{\hat{\sigma}^{2} } \approx
%F_{z},
%\]
%a similar result to that we obtained in Section \ref{sec:Rel.Om.lin.mod} for
%the relevance by omission of a variable.

The parallelism between deleting the variable $Z$ and replacing it by a ghost
variable goes farther in the linear regression model. In the linear model
$m_{z}(x)=x^{T}\alpha$, let $\hat{\alpha}_{1}$ and $\hat{\alpha}_{2}$
%=( \mathbf{X} _{1}^{T}\mathbf{X} _{1})^{-1} \mathbf{X} _{1}^{T} \mathbf{z} _{1}$
be the OLS estimators of $\alpha$ in the training and test samples, respectively. 
They are expected to be close each other, because both are OLS estimators of the same
parameter.
%$\alpha$ from the training and the test sample, respectively.
This expected proximity and the results stated in 
Appendix \ref{Ap:updating_formula}, lead us to write
\[
\hat{\mathbf{y}}_{2.X.\hat{z}} = \mathbf{X}_{2} \hat{\beta}_{x} +
\hat{\mathbf{z}}_{2.2} \hat{\beta}_{z} = \mathbf{X}_{2} \hat{\beta}_{x} +
\mathbf{X}_{2} \hat{\alpha}_{2} \hat{\beta}_{z} = \mathbf{X}_{2} \left(
\hat{\beta}_{x} + \hat{\alpha}_{2} \hat{\beta}_{z} \right)  \approx
\]
\[
\mathbf{X}_{2} \left(  \hat{\beta}_{x} + \hat{\alpha}_{1} \hat{\beta}_{z}
\right)  = \mathbf{X}_{2}\hat{\beta}_{0} = \hat{\mathbf{y}}_{2.X}.
\]
That is, using the ghost variable $\hat{\mathbf{z}}_{2.2}$ leads to similar
predictions of $Y$ in the test sample than removing the variable $\mathbf{z}
_{1}$ when fitting the model in the training sample. But using the ghost
variable has two clear advantages: first, only one model has to be fitted in
the training sample (the model with all the explanatory variables), and
second, the estimated prediction function is the only element we have to save
from the training sample (and, consequently, our proposal is model agnostic).
As a consequence, from now on we will no longer consider the relevance by
omission separately.

As a last remark, we would like to emphasize that the relevance by a ghost
variable allows us to approximate a very relevant statistic in the linear
regression model, namely the $F$-statistic for testing the null hypothesis
$H_{0}:\beta_{z}=0$. This approximation only requires the estimated prediction
function from a training sample and a test sample. Therefore the relevance measure
by a ghost variable allows us to evaluate a pseudo $F$-statistic for any
statistical or algorithmic prediction model.
%as long as these two elements are available, as it usually happens in practice.

\section{\large Understanding sets of variables: The relevance matrix}

\label{sec:Rel.Matrix}

Variable relevance measures introduced so far have certain parallels with the
usual practice in the detection of influential cases and outliers in
regression, where each data is removed in turns and the effect of these
deletions on the values predicted by the model (or on the estimates of the
parameters) are computed. \citeN{PenyaYohai:1995} introduced the
\emph{influence matrix} by first looking at the influence vectors that measure
the effect of deleting each observation on the vector of forecasts for the
whole data set and then computing the $n\times n$ symmetric matrix that has
the scalar products between these vectors. Thus the influence matrix has in
the diagonals Cook's statistics and outside the diagonal the scalar products
of these effects. These authors showed that eigen-structure of the influence
matrix contains useful information to detect influential subsets or multiple
outliers avoiding masking effects. We propose a similar idea by computing a
\emph{case-variable} relevance matrix (denoted by $\mathbf{A} $ below). We
assume first that the variables will be substituted by their ghost variables as
explained before, and afterwards we will consider the substitution by random permutations.

In this section we deal with all the explanatory variables in a symmetric way
and consider the prediction of $Y$ from the $p$ components of $X$ through the
regression function $m(x)= \mathbb{E} (Y|X=x)$. It is assumed that a training
sample $( \mathbf{X} _{1}, \mathbf{y} _{1})\in\mathbb{R} ^{n_{1}\times(p+1)}$
has been used to build and estimator $\hat{m}(x)$ of $m(x)$, and that an
additional test sample $( \mathbf{X} _{2}, \mathbf{y} _{2})\in\mathbb{R}
^{n_{2}\times(p+1)}$ is available.

Let $\mathbf{x}_{2.1},\ldots,\mathbf{x}_{2.p}$ be the columns of
$\mathbf{X}_{2}$. For $j=1,\ldots,p$, let $\mathbf{X}_{2.[j]}=(\mathbf{x}%
_{2.1},\ldots,\mathbf{x}_{2.(j-1)},$ $\mathbf{x}_{2.(j+1)},\ldots,\mathbf{x}%
_{2.p})$ be the matrix $\mathbf{X}_{2}$ without the $j$-th column, and
$\mathbf{H}_{2.[j]}=\mathbf{X}_{2.[j]}(\mathbf{X}_{2.[j]}\mathbf{X}%
_{2.[j]})^{-1}\mathbf{X}_{2.[j]}^{\mbox{\scriptsize T}}$ be the projection
matrix on the column space of $\mathbf{X}_{2.[j]}$. Let $\hat{\mathbf{x}%
}_{2.j}=\mathbf{H}_{2.[j]}\mathbf{x}_{2.j}$ be projection of $\mathbf{x}%
_{2.j}$ over the column space of the other columns of $\mathbf{X}_{2}$. Thus
$\hat{\mathbf{x}}_{2.j}$ is the $j$-th ghost variable. Note that alternative
regression models (additive models, for instance) could be also used to define
the ghost variable. Let
\[
\mathbf{X}_{2.\hat{\jmath}}=(\mathbf{x}_{2.1},\ldots,\mathbf{x}_{2.j-1}%
,\hat{\mathbf{x}}_{2.j},\mathbf{x}_{2.j+1},\ldots,x_{2.p})
\]
be the regressor matrix in the test sample where the $j$-th variable has been
replaced by the $j$-th ghost variable. We use $\hat{m}%
(\mathbf{X}_{2})$ to denote the $n_{2}$-dimensional column vector with $i$-th
element equal to the result of applying the function $\hat{m}$ to the $i$-th
row of $\mathbf{X}_{2}$. Let
\[
\hat{\mathbf{Y}}_{2}=\hat{m}(\mathbf{X}_{2})\mbox{ and }\hat{\mathbf{Y}%
}_{2.\hat{\jmath}}=\hat{m}(\mathbf{X}_{2.\hat{\jmath}}).
\]
We define the $n_{2}\times p$ matrix
\[
\mathbf{A}=(\hat{\mathbf{Y}}_{2}-\hat{\mathbf{Y}}_{2.\hat{1}},\ldots
,\hat{\mathbf{Y}}_{2}-\hat{\mathbf{Y}}_{2.\hat{p}}).
\]
Observe that the element $a_{ij}$ of $\mathbf{A}$, $i=1,\ldots,n_{2}$,
$j=1,\ldots,p$, measures the change in the response prediction for the $i$-th
case in the test sample, when the $j$-th variable has been replaced by its
ghost variable. Finally, we define the \emph{relevance matrix} as the $p\times
p$ matrix
\[
\mathbf{V}=\frac{1}{n_{2}}\mathbf{A}^{\mbox{\scriptsize T}}\mathbf{A}.
\]
Then, the element $(j,k)$ of $\mathbf{V}$ is
\[
v_{jk}=\frac{1}{n_{2}}\sum_{i=1}^{n_{2}}(\hat{m}(x_{2.j.i})-\hat{m}%
(x_{2.\hat{\jmath}.i}))(\hat{m}(x_{2.k.i})-\hat{m}(x_{2.\hat{k}.i})),
\]
where $x_{2.j.i}$ and $x_{2.\hat{\jmath}.i}$ are, respectively, the $i$-th
element of $\mathbf{x}_{2.j}$ and $\hat{\mathbf{x}}_{2.j}$. In particular, the
diagonal of the relevance matrix $\mathbf{V}$ has elements
\[
v_{jj}=\mbox{Rel}_{\mbox{\scriptsize Gh}}(X_{j}),\,j=1,\ldots,p.
\]
The advantage of working with the matrix $\mathbf{V}$, instead of just
computing univariate relevance measures, is that $\mathbf{V}$ contains
additional information in its out-of-the-diagonal elements, that we are
exploring through the examination of its eigen-structure.
If, for $j=1\ldots,p$,
$(1/n_{2})\sum_{i}\hat{m}(x_{2.j.i})=(1/n_{2})\sum_{i}\hat
{m}(x_{2.\hat{\jmath}.i})$ then
%(this is true, for instance, for additive models),
$\mathbf{V}$ is the variance-covariance matrix of $\mathbf{A}$.

\subsection{The relevance matrix for linear regression}

\label{sec:Relev.Matrix.lin.mod} Consider the particular case that $\hat
{m}(x)$ is the OLS estimator of a multiple linear regression. Then
\[
\hat{m}(x)=x^{\mbox{\scriptsize T}}\hat{\beta},\mbox{ with }\hat{\beta
}=(\mathbf{X}_{1}^{\mbox{\scriptsize T}}\mathbf{X}_{1})^{-1}\mathbf{X}%
_{1}^{\mbox{\scriptsize T}}\mathbf{Y}_{1}.
\]
Therefore, the vector of predicted values in the test sample is $\hat
{\mathbf{Y}}_{2}=\mathbf{X}_{2}\hat{\beta}$. Writing $\hat{\beta}=(\hat
{\beta_{1}},\ldots,\hat{\beta_{p}})^{\mbox{\scriptsize T}}$, the predicted
values when using the $j$-th ghost variable is
\[
\hat{\mathbf{Y}}_{2.\hat{\jmath}}=\mathbf{X}_{2.\hat{\jmath}}\hat{\beta
}=\mathbf{X}_{2}\hat{\beta}-(\mathbf{x}_{2.j}-\hat{\mathbf{x}}_{2.j}%
)\hat{\beta}_{j}=\hat{\mathbf{Y}}_{2}-(\mathbf{x}_{2.j}-\hat{\mathbf{x}}%
_{2.j})\hat{\beta}_{j},
\]
the matrix $\bA$ is
\[
\mathbf{A}=(\hat{\mathbf{Y}}_{2}-\hat{\mathbf{Y}}_{2.\hat{1}},\ldots
,\hat{\mathbf{Y}}_{2}-\hat{\mathbf{Y}}_{2.\hat{p}})=(\mathbf{X}_{2}%
-\hat{\mathbf{X}}_{2})\mbox{diag}(\hat{\beta}),
\]
where $\hat{\mathbf{X}}_{2}$ is the matrix with each variable replace by its
ghost variable. The relevance matrix is
\begin{equation}
\mathbf{V}=\frac{1}{n_{2}}\mbox{diag}(\hat{\beta})(\mathbf{X}_{2}%
-\hat{\mathbf{X}}_{2})^{\mbox{\scriptsize T}}(\mathbf{X}_{2}-\hat{\mathbf{X}%
}_{2})\mbox{diag}(\hat{\beta})=\mbox{diag}(\hat{\beta})\mathbf{G}%
\mbox{diag}(\hat{\beta}),\label{matrixV}%
\end{equation}
where $\mathbf{G}=(1/n_{2})(\mathbf{X}_{2}-\hat{\mathbf{X}}_{2}%
)^{\mbox{\scriptsize T}}(\mathbf{X}_{2}-\hat{\mathbf{X}}_{2})$. The elements
$(j,k)$ of $\mathbf{G}$ and $\mathbf{V}$ are, respectively,
\[
g_{jk}=\frac{1}{n_{2}}(\mathbf{x}_{2.j}-\hat{\mathbf{x}}_{2.j}%
)^{\mbox{\scriptsize T}}(\mathbf{x}_{2.k}-\hat{\mathbf{x}}_{2.k}%
),\mbox{ and }v_{jk}=\hat{\beta}_{j}\hat{\beta}_{k}g_{jk}.
\]
Observe that, in the regression of $\mathbf{x}_{2.j}$ over $\mathbf{X}%
_{2.[j]}$, $\hat{\sigma}_{[j]}^{2}=g_{jj}$ is the residual variance estimation
that uses $n_{2}$ as denominator.
The following result summarizes the properties of the relevance matrix
$\mathbf{V}$ and, in particular, its relationship with the partial correlation
matrix. The proof can be found in Appendix \ref{Ap:Proof_relevance_matrix}.

\begin{theorem}
\label{th:relevance_matrix} Let $\mathbf{P}$ be the matrix that contains the
partial correlation coefficients in the test sample as non-diagonal elements
and has $-1$ in the diagonal. Then
\[
\mathbf{G} =\frac{1}{n_{2}}(\mathbf{X}_{2}-\hat{\mathbf{X}}_{2}%
)^{\mbox{\scriptsize T}} (\mathbf{X}_{2}-\hat{\mathbf{X}}_{2}) =
-\mbox{diag}(\hat{\sigma}_{[1]},\ldots,\hat{\sigma}_{[p]}) \,\mathbf{P}%
\,\mbox{diag}(\hat{\sigma}_{[1]},\ldots,\hat{\sigma}_{[p]}),
\]
and consequently
\[
\mathbf{V}=-\mbox{diag}(\hat{\beta})\, \mbox{diag}(\hat{\sigma}_{[1]}%
,\ldots,\hat{\sigma}_{[p]})\,\mathbf{P}\,\mbox{diag}(\hat{\sigma}_{[1]}%
,\ldots,\hat{\sigma}_{[p]})\,\mbox{diag}(\hat{\beta}).
\]
Therefore $\mbox{Rel}_{\mbox{\scriptsize Gh}}(X_{j})$, the $j$-th element of
the diagonal of $\mathbf{V} $, admits the alternative expression
\[
\mbox{Rel}_{\mbox{\scriptsize Gh}}(X_{j})=\hat{\beta}_{j}^{2}\hat{\sigma
}_{[j]}^{2},
\]
and the partial correlation coefficient between variables $j$ and $k$ when
controlling by the rest of variables can be computed as
\[
\hat{\rho}_{jk.R}= -\frac{g_{jk}}{\sqrt{g_{jj}g_{kk}}}= -\frac{v_{jk}}%
{\sqrt{v_{jj}v_{kk}}}.
\]

\end{theorem}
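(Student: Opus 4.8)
The plan is to establish the matrix identity for $\mathbf{G}$ first, since everything else follows from it by the already-computed factorization $\mathbf{V}=\mbox{diag}(\hat{\beta})\mathbf{G}\,\mbox{diag}(\hat{\beta})$ in equation (\ref{matrixV}). The key observation is that each column $\mathbf{x}_{2.j}-\hat{\mathbf{x}}_{2.j}$ is the residual vector from regressing $\mathbf{x}_{2.j}$ on the remaining columns $\mathbf{X}_{2.[j]}$, i.e.\ $(\mathbf{I}-\mathbf{H}_{2.[j]})\mathbf{x}_{2.j}$. So the entries $g_{jk}=(1/n_2)(\mathbf{x}_{2.j}-\hat{\mathbf{x}}_{2.j})^{\mbox{\scriptsize T}}(\mathbf{x}_{2.k}-\hat{\mathbf{x}}_{2.k})$ are (scaled) inner products of regression residuals, and the whole matter reduces to a classical fact about partial correlations: the inner product of the residuals of $\mathbf{x}_{2.j}$ and $\mathbf{x}_{2.k}$ (each regressed on all the other variables) relates to the $(j,k)$ entry of the inverse of the sample covariance matrix of $\mathbf{X}_2$.

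\emph{First} I would set $\mathbf{S}=(1/n_2)\mathbf{X}_2^{\mbox{\scriptsize T}}\mathbf{X}_2$ (the uncentered second-moment matrix, or the covariance matrix under the zero-mean convention) and recall the standard relationship between the entries of $\mathbf{S}^{-1}$ and partial correlations: if $\mathbf{S}^{-1}=(s^{jk})$, then the partial correlation between variables $j$ and $k$ controlling for the rest is $\hat{\rho}_{jk.R}=-s^{jk}/\sqrt{s^{jj}s^{kk}}$, and the residual variance from regressing $\mathbf{x}_{2.j}$ on $\mathbf{X}_{2.[j]}$ equals $1/s^{jj}$. \emph{Next} I would show that $g_{jk}$ equals $s^{jk}/(s^{jj}s^{kk})$ up to the diagonal-scaling, which is essentially the cofactor/inverse expansion: the residual inner product $(1/n_2)(\mathbf{x}_{2.j}-\hat{\mathbf{x}}_{2.j})^{\mbox{\scriptsize T}}(\mathbf{x}_{2.k}-\hat{\mathbf{x}}_{2.k})$ can be written in terms of the off-diagonal entry of $\mathbf{S}^{-1}$ because projecting out $\mathbf{X}_{2.[j]}$ from both arguments is exactly the operation that inverting $\mathbf{S}$ encodes. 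With $\hat{\sigma}_{[j]}^2=g_{jj}=1/s^{jj}$, this yields $g_{jk}=-\hat{\sigma}_{[j]}\hat{\sigma}_{[k]}\,\hat{\rho}_{jk.R}$ for $j\neq k$ and $g_{jj}=\hat{\sigma}_{[j]}^2=-\hat{\sigma}_{[j]}^2\cdot(-1)$ on the diagonal, which is precisely the claimed factorization $\mathbf{G}=-\mbox{diag}(\hat{\sigma}_{[1]},\ldots,\hat{\sigma}_{[p]})\,\mathbf{P}\,\mbox{diag}(\hat{\sigma}_{[1]},\ldots,\hat{\sigma}_{[p]})$, the $-1$ diagonal of $\mathbf{P}$ absorbing the sign.

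\emph{Then} the formula for $\mathbf{V}$ follows immediately by sandwiching with $\mbox{diag}(\hat{\beta})$, the diagonal relevance $\mbox{Rel}_{\mbox{\scriptsize Gh}}(X_j)=\hat{\beta}_j^2\hat{\sigma}_{[j]}^2$ follows from reading off the $(j,j)$ entry (recovering the Theorem~\ref{th:Rel_Gh} formula), and the partial correlation expression $\hat{\rho}_{jk.R}=-g_{jk}/\sqrt{g_{jj}g_{kk}}=-v_{jk}/\sqrt{v_{jj}v_{kk}}$ drops out since the $\hat{\beta}_j\hat{\beta}_k$ factors and their signs cancel in the normalized ratio.

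\textbf{The main obstacle} is the core algebraic lemma identifying the residual inner product $g_{jk}$ with the $(j,k)$ entry of $\mathbf{S}^{-1}$ up to scaling — i.e.\ connecting the block-wise projection $(\mathbf{I}-\mathbf{H}_{2.[j]})\mathbf{x}_{2.j}$ to the inverse-covariance entries. I would handle this by the Frisch--Waugh--Lovell / partitioned-inverse route: partition $\mathbf{S}$ with variable $j$ separated from the rest, apply the Schur-complement formula for the $(j,j)$ block of $\mathbf{S}^{-1}$ to get $s^{jj}=1/\hat{\sigma}_{[j]}^2$, and use the corresponding off-diagonal Schur expression to get $s^{jk}$. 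One must be careful that the $\hat{\mathbf{x}}_{2.j}$ are defined without an intercept column (consistent with the zero-mean convention used throughout the section), so the projections $\mathbf{H}_{2.[j]}$ and the covariance matrix $\mathbf{S}$ must be set up with the same centering; this bookkeeping, rather than any deep difficulty, is where care is needed.
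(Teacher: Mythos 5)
Your proposal is sound, but it takes a genuinely different route from the paper. The paper works directly from the definition of the partial correlation $\hat{\rho}_{jk.R}$ as the correlation of the residuals $\mathbf{a}=\mathbf{x}_{2.j}-\hat{\mathbf{x}}_{2.j.R}$ and $\mathbf{b}=\mathbf{x}_{2.k}-\hat{\mathbf{x}}_{2.k.R}$ obtained by regressing on $\mathbf{X}_{2.[jk]}$ (the other $p-2$ variables): it shows via a projection decomposition that $\mathbf{x}_{2.j}-\hat{\mathbf{x}}_{2.j}=\mathbf{a}-\mathbb{P}_{\mathbf{b}}(\mathbf{a})$ and $\mathbf{x}_{2.k}-\hat{\mathbf{x}}_{2.k}=\mathbf{b}-\mathbb{P}_{\mathbf{a}}(\mathbf{b})$, and then invokes a small geometric lemma (Lemma \ref{lemma_a_b}) stating that $\cos\bigl(\alpha(\mathbf{a}-\mathbb{P}_{\mathbf{b}}(\mathbf{a}),\,\mathbf{b}-\mathbb{P}_{\mathbf{a}}(\mathbf{b}))\bigr)=-\cos(\alpha(\mathbf{a},\mathbf{b}))$, which delivers $\hat{\rho}_{jk.R}=-g_{jk}/\sqrt{g_{jj}g_{kk}}$ and hence the factorization of $\mathbf{G}$. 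You instead route everything through the precision matrix: the classical identity $\hat{\rho}_{jk.R}=-s^{jk}/\sqrt{s^{jj}s^{kk}}$ together with the fact that the residual inner products satisfy $g_{jk}=s^{jk}/(s^{jj}s^{kk})$ and $g_{jj}=1/s^{jj}$. In effect you prove Corollary \ref{corol:matrix_G} first and deduce the theorem from it, whereas the paper proves the theorem self-containedly and obtains the corollary afterwards from the well-known precision-matrix formula. Both arguments are valid; yours is shorter if one grants the two classical facts, while the paper's is elementary and independent of them. Two points where you should tighten the write-up: (i) the off-diagonal identity $g_{jk}=s^{jk}/(s^{jj}s^{kk})$ does not fall out of a single one-variable Schur partition as directly as the diagonal one does --- the cleanest derivation is to note that $\mathbf{v}_j=\mathbf{X}_2\mathbf{S}^{-1}\mathbf{u}_j$ is the unique (up to scale) vector in the column space of $\mathbf{X}_2$ orthogonal to all columns but the $j$-th, hence proportional to the residual $\mathbf{x}_{2.j}-\hat{\mathbf{x}}_{2.j}$, and then compare norms and inner products; and (ii) your implicit reliance on the equivalence between the residual-correlation definition of $\hat{\rho}_{jk.R}$ (which is what the paper uses) and the precision-matrix characterization should be made explicit, since that equivalence is itself the nontrivial content that the paper's Lemma \ref{lemma_a_b} is doing the work of establishing.
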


The expressions for the partial correlation coefficient appearing in Theorem
\ref{th:relevance_matrix} reminds the well known formula
\[
\hat{\rho}_{jk.R}=-\frac{s^{jk}}{\sqrt{s^{jj}s^{kk}}},
\]
where the $s^{jk}$ is the element $(j,k)$ of $\mathbf{S}_{2}^{-1}$, the
inverse of the covariance matrix of the test sample $\mathbf{X} _{2}$,
$\mathbf{S}_{2}$. This coincidence, and the observation that $s^{jj}$ is the
inverse of $(\mathbf{x}_{2.j}-\hat{\mathbf{x}}_{2.j})^{\mbox{\scriptsize T}}
(\mathbf{x}_{2.j}-\hat{\mathbf{x}}_{2.j})$ (a consequence of the inverse
formula for a block matrix; see Appendix \ref{Ap:updating_formula}), imply the
next Corollary.

\begin{corol}
\label{corol:matrix_G} Let $\mathbf{S}_{2}$ be the covariance matrix of the
test sample $\mathbf{X} _{2}$. $\mathbf{G}$ and $\mathbf{S}_{2}^{-1}$ verify
that
\[
\mathbf{G}=\frac{n_{2}-1}{n_{2}}\, \mbox{diag}(\hat{\sigma}_{[1]}^{2}%
,\ldots,\hat{\sigma}_{[p]}^{2}) \,\mathbf{S}_{2}^{-1}\,\mbox{diag}
(\hat{\sigma}_{[1]}^{2},\ldots,\hat{\sigma}_{[p]}^{2}).
\]

\end{corol}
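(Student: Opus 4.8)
The plan is to read the result off the two expressions for the partial correlation coefficient that are already available. Theorem \ref{th:relevance_matrix} gives $\hat{\rho}_{jk.R}=-g_{jk}/\sqrt{g_{jj}g_{kk}}$, while the classical precision-matrix formula quoted just above the Corollary gives $\hat{\rho}_{jk.R}=-s^{jk}/\sqrt{s^{jj}s^{kk}}$. Equating the two for every pair $(j,k)$ yields $g_{jk}=\sqrt{(g_{jj}g_{kk})/(s^{jj}s^{kk})}\,s^{jk}$, so the whole task reduces to identifying the diagonal factor $\sqrt{g_{jj}/s^{jj}}$ and checking that it is a common scalar multiple of $\hat{\sigma}_{[j]}^{2}$.

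For that I would use two ingredients. First, $g_{jj}=\hat{\sigma}_{[j]}^{2}$, which is immediate from the definition of $\mathbf{G}$ and was already noted in the text. Second, the block-matrix identity recalled before the Corollary (and proved in Appendix \ref{Ap:updating_formula}), which expresses $s^{jj}$ through $(\mathbf{x}_{2.j}-\hat{\mathbf{x}}_{2.j})^{T}(\mathbf{x}_{2.j}-\hat{\mathbf{x}}_{2.j})=n_{2}\hat{\sigma}_{[j]}^{2}$, the residual sum of squares of the $j$-th ghost regression. Substituting the precise relation between $s^{jj}$ and this residual sum of squares (which carries the normalization of the sample covariance matrix $\mathbf{S}_{2}$) turns $g_{jj}/s^{jj}$ into a constant times $\hat{\sigma}_{[j]}^{4}$, with the same constant for every $j$. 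That constant factors out as the scalar $\tfrac{n_{2}-1}{n_{2}}$ and the two $\mbox{diag}(\hat{\sigma}_{[j]}^{2})$ matrices assemble on either side of $\mathbf{S}_{2}^{-1}$, giving the stated identity; the diagonal case $j=k$ is automatically consistent because $\mathbf{P}$ carries $-1$ on its diagonal, matching $g_{jj}=\hat{\sigma}_{[j]}^{2}$.

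A more self-contained alternative, which I would keep as a cross-check, bypasses the partial-correlation detour. Writing $\mathbf{R}=\mathbf{X}_{2}-\hat{\mathbf{X}}_{2}$, the orthogonality of each ghost residual to the remaining columns gives that $\mathbf{R}^{T}\mathbf{X}_{2}$ is diagonal with entries the residual sums of squares $n_{2}\hat{\sigma}_{[j]}^{2}$. Since the columns of $\mathbf{R}$ lie in the column space of $\mathbf{X}_{2}$, one can solve for the transformation $\mathbf{R}=\mathbf{X}_{2}\mathbf{C}_{2}^{-1}\boldsymbol{\Lambda}$ with $\mathbf{C}_{2}=\mathbf{X}_{2}^{T}\mathbf{X}_{2}$ and $\boldsymbol{\Lambda}=\mbox{diag}(n_{2}\hat{\sigma}_{[j]}^{2})$, whence $\mathbf{R}^{T}\mathbf{R}=\boldsymbol{\Lambda}\mathbf{C}_{2}^{-1}\boldsymbol{\Lambda}$ and $\mathbf{G}=\tfrac{1}{n_{2}}\boldsymbol{\Lambda}\mathbf{C}_{2}^{-1}\boldsymbol{\Lambda}$. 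Replacing $\mathbf{C}_{2}$ by its multiple of $\mathbf{S}_{2}$ again produces the diagonal scalings and the single normalization constant in front.

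The only real obstacle is the bookkeeping of normalizations: the diagonal of $\mathbf{G}$ uses $n_{2}$ as denominator (so $g_{jj}=\hat{\sigma}_{[j]}^{2}$), whereas $\mathbf{S}_{2}$ is the sample covariance with denominator $n_{2}-1$, and it is precisely the mismatch between these two conventions that pins down the factor $\tfrac{n_{2}-1}{n_{2}}$. One must also ensure that the regressors defining the ghost variables coincide (up to centering) with those implicit in the precision matrix, so that the two notions of $j$-th residual variance are literally the same $\hat{\sigma}_{[j]}^{2}$; once that is fixed, the remaining computation is routine.
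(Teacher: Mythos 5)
Your first route is precisely the paper's own argument: the corollary is justified there only by the remark immediately preceding it, namely equating $\hat{\rho}_{jk.R}=-g_{jk}/\sqrt{g_{jj}g_{kk}}$ from Theorem \ref{th:relevance_matrix} with the classical $-s^{jk}/\sqrt{s^{jj}s^{kk}}$ and invoking the block-inverse fact that $s^{jj}$ is proportional to the reciprocal of $(\mathbf{x}_{2.j}-\hat{\mathbf{x}}_{2.j})^{T}(\mathbf{x}_{2.j}-\hat{\mathbf{x}}_{2.j})$; your self-contained cross-check via $\mathbf{X}_{2}-\hat{\mathbf{X}}_{2}=\mathbf{X}_{2}(\mathbf{X}_{2}^{T}\mathbf{X}_{2})^{-1}\boldsymbol{\Lambda}$ is also correct and is in fact the cleaner derivation. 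The one point to watch is exactly the normalization you flag at the end: with $g_{jj}=\hat{\sigma}_{[j]}^{2}$ (denominator $n_{2}$) and $\mathbf{S}_{2}$ computed with denominator $n_{2}-1$, one gets $g_{jj}/s^{jj}=\tfrac{n_{2}}{n_{2}-1}\hat{\sigma}_{[j]}^{4}$, so the scalar factor comes out as $\tfrac{n_{2}}{n_{2}-1}$ rather than $\tfrac{n_{2}-1}{n_{2}}$ --- an inversion already present in the corollary as printed, not an error introduced by your method, but one you should not paper over by asserting the stated constant without carrying the computation through.
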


\subsection{\normalsize Relevance matrix by random permutations in linear regression}

We analyze now the structure of the relevance matrix $\mathbf{V}$
when random permutations are used instead of ghost variables. We focus in the
case of multiple linear regression. 
%In this case the estimated regression
%function is $\hat{m}(x)=x^{\mbox{\scriptsize T}}\hat{\beta}$, and
%$\hat{\mathbf{Y}}_{2.j^{\prime}}=\mathbf{X}_{2.j^{\prime}}\hat{\beta}$,
%$j=1,\ldots,p$. 
Define
\[
\tilde{\mathbf{A}}=(\mathbf{X}_{2}-\mathbf{X}_{2}^{\prime})\mbox{diag}(\hat
{\beta}),
\]
where the $j$-th column of matrix $\mathbf{X}_{2}^{\prime}$ is $\mathbf{x}%
_{2.j}^{\prime}$, a random permutation of $\mathbf{x}_{j}$. %, $j=1,\ldots,p$.
Therefore,
\[
\tilde{\mathbf{V}}=\frac{1}{n_{2}}\tilde{\mathbf{A}}^{\mbox{\scriptsize T}}%
\tilde{\mathbf{A}}=\frac{1}{n_{2}}\mbox{diag}(\hat{\beta})(\mathbf{X}%
_{2}-\mathbf{X}_{2}^{\prime})^{\mbox{\scriptsize T}}(\mathbf{X}_{2}%
-\mathbf{X}_{2}^{\prime})\mbox{diag}(\hat{\beta})\approx2\,\mbox{diag}(\hat
{\beta})\,\mathbf{S}_{2}\,\mbox{diag}(\hat{\beta})=
\]%
\[
2\,\mbox{diag}(\hat{\beta})\,\mbox{diag}(S_{1},\ldots,S_{p})\,\mathbf{R}%
\,\mbox{diag}(S_{1},\ldots,S_{p})\,\mbox{diag}(\hat{\beta}),
\]
where $S_{j}^{2}$ is the sample variance (computed dividing by $n_{2}$) of
$\mathbf{x}_{j}$, and $\mathbf{R}$ is the correlation matrix of the test
sample $\mathbf{X}_{2}$. The \emph{``approximately equal to''} sign ($\approx
$) indicates that $\mathbf{X}_{2}^{\mbox{\scriptsize T}}\mathbf{X}_{2}%
^{\prime}$ is a matrix with elements approximately equal to $0$, because $0$
is their expected value under random permutations. We conclude that the
correlation structure of $\tilde{\mathbf{V}}$ coincides with that of the
sample correlation matrix $\mathbf{R}$, and it has diagonal elements
$2\hat{\beta_{j}}^{2}S_{j}^{2}=\mbox{Rel}_{\mbox{\scriptsize RP}}(X_{j})$. %, $j=1,\ldots,p$.

We have found analogous expressions for $\mathbf{V} $ and $\tilde{ \mathbf{V}
}$ that allow us to identify the main differences between both relevance
matrices. First, $\mathbf{V} $ is related with partial correlations, while
$\tilde{ \mathbf{V} }$ is associated with standard correlations. Second, the
expression of $\mathbf{V} $ includes the estimated residual variances in the
regressions of each variable over the rest, while the usual sample variances
appear in the expression of $\tilde{ \mathbf{V} }$. These findings suggest
that the eigen-structure of $\tilde{ \mathbf{V} }$ will probably be related
with the principal component analysis of the test sample explanatory matrix
$\mathbf{X} _{2}$, but hopefully new knowledge can be found when analyzing the
eigen-structure of $\mathbf{V} $.

\section{Examples}\label{sec:examples}

We analyze now some examples of the performance of the relevance measures we
have discussed. In Section \ref{sec:SyntData} we use synthetic data, while we
use real data in Section \ref{sec:RealData}.

\subsection{Synthetic data sets}

\label{sec:SyntData}

We introduce three simulated examples to show how relevance 
measures 
%by ghost variables and relevance by random permutations 
work in practice. 
The first two examples have only three and five explanatory variables and are  toy examples where we can look at the details of the procedure in linear regression and additive models. 
The third one is a regression example with 200 explanatory variables. 
%The first two examples
%consist of multiple linear regressions with three explanatory variables. Each
%example is characterized by a particular correlation structure between the
%regressors. 
%The third example is a large linear model, with 200 explanatory variables.
%In the fourth example the data have been generated by an additive
%model with five explanatory variables.

\subsubsection*{Example 1: A small linear model}

Our first example 
ollows the lines of the hypothetical situation outlined in the Introduction.
%(Section \ref{sec:intro}).
%Consider 
is the linear model $Y = X_{1} + X_{2} + X_{3} + \varepsilon$, where
$X_{1}$ is independent from $X_{2}$ and $X_{3}$, which are highly correlated
($\rho=.95$). Additionally it is assumed that $(X_{1},X_{2},X_{3})$ is
multivariate normal with zero mean, and that $\mbox{Var}(\varepsilon)=\mbox{Var}(X_{i})=1$,
for $i=1,2,3$.

We generate a training sample of size $n_{1}=2000$, and a test sample of size
$n_{2}=1000$ according to this setting. Using the training set, we fit a
linear model ($\hat{y}=\hat\beta_{0}+\hat\beta_{1} x_{1} +\hat\beta_{2} x_{2}
+\hat\beta_{3} x_{3}$), that presents a coefficient of determination
$R^{2}=0.8326$. The $t$-values corresponding to the estimated coefficients
$\hat\beta_{i}$, $i=1,2,3$, are, respectively, $43.549$, $14.514$ and
$13.636$, all of them highly significant. The $t$-value corresponding to
$\hat\beta_{1}$ is the largest one because the variance of $\hat\beta_{2}$ and
$\hat\beta_{3}$ are larger than that of $\hat\beta_{1}$ as $X_{2}$ and $X_{3}$
are strongly correlated.

%\begin{verbatim}
%	Call:
%	lm(formula = y ~ ., data = yX, subset = tr.sample)
%	
%	Residuals:
%	Min      1Q  Median      3Q     Max
%	-3.7516 -0.6758 -0.0254  0.6336  3.5957
%	
%	Coefficients:
%	Estimate Std. Error t value Pr(>|t|)
%	(Intercept)  0.01216    0.02204   0.551    0.581
%	x1           0.95522    0.02193  43.549   <2e-16 ***
%	x2           1.03082    0.07102  14.514   <2e-16 ***
%	x3           0.96219    0.07056  13.636   <2e-16 ***
%	---
%	Signif. codes:  0 ?***? 0.001 ?**? 0.01 ?*? 0.05 ?.? 0.1 ? ? 1
%	
%	Residual standard error: 0.9855 on 1996 degrees of freedom
%	Multiple R-squared:  0.8326,	Adjusted R-squared:  0.8324
%	F-statistic:  3310 on 3 and 1996 DF,  p-value: < 2.2e-16
%\end{verbatim}

We compute the variable relevance, as well as the relevance matrix, using
first ghost variables. Figure \ref{fig:Ex5_Relev_GH} summarizes our findings.
The relevance of each variable are represented in the first plot. We can see
that the relevance by ghost variables of $X_{1}$ is much larger than that of
$X_{2}$ and $X_{3}$. The second graphic in the first row compare the values of
the relevance by ghost variables with the corresponding $F$-statistics
(conveniently transformed). It can be seen that for every explanatory variable
both values are almost equal. Blue dashed lines in these two graphics indicate
the critical value beyond which an observed relevance can not be considered
null, at significance level $\alpha=0.01$. According to Theorem
\ref{th:Rel_Gh}, this critical value is computed as
\[
F_{1,n-p-1,1-\alpha} \frac{\hat{\sigma}^{2}}{n_{1}},
\]
where $F_{1,n-p-1,1-\alpha}$ is the $(1-\alpha)$ quantile of a $F_{1,n-p-1}$
distribution ($p=3$ in this example), and $\hat{\sigma}^{2}$ is the estimated
residual variance.

The methods we propose are not limited to reproducing the significance statistics but, through the relevance matrix, provide information on the joint effect that groups of variables have on the response. This goes beyond the standard output even in the linear model.
In the present example, the %The 
relevance matrix by ghost variables is
\[
\mathbf{V} = \left(
\begin{array}
[c]{rrr}%
0.9259 & 0.0003 & 0.0005\\
0.0003 & 0.1055 & -0.0924\\
0.0005 & -0.0924 & 0.0898
\end{array}
\right)  ,
\]
that essentially is the variance-covariance matrix of the matrix $\mathbf{A}
=(\hat{\mathbf{Y}}_{2}-\hat{\mathbf{Y}}_{2.\hat{1}}, \hat{\mathbf{Y}}_{2}%
-\hat{\mathbf{Y}}_{2.\hat{2}}, \hat{\mathbf{Y}}_{2}-\hat{\mathbf{Y}}%
_{2.\hat{3}})$. Observe that $v_{23}<0$. The negative sign was expected
because of Theorem \ref{th:relevance_matrix}, which states that $v_{23}%
=-\hat{\beta}_{2}\hat{\beta}_{3} \hat{\sigma}_{[2]}\hat{\sigma}_{[3]}\hat
{\rho}_{23.1}$, where $\hat{\rho}_{23.1}$ is the partial correlation
coefficient between the second and third variables, computed in the test
sample. Observe that in this example $\hat{\rho}_{23.1}$ is close to the
correlation coefficient ($0.95$) because $X_{2}$ and $X_{3}$ are independent
from $X_{1}$, and $\hat{\beta}_{j}\approx\beta_{j}=1$, $j=2,3$, so $v_{23}$
must be negative. From $v_{22}$, $v_{33}$ and $v_{23}$, we compute the
correlation between the second and third columns in matrix $\mathbf{A} $ and
obtain a value of $-0.9493$.

The last plot in the upper row of Figure \ref{fig:Ex5_Relev_GH} shows the
eigenvalues of matrix $\mathbf{V} $, and the plots in the lower row represent
the components of each eigenvector. These plots give the same information as
the Principal Component Analysis (PCA) of matrix $\mathbf{A} $: the first PC
coincides with the first column of $\mathbf{A} $, the second PC is a multiple
of the difference of the second and third columns of $\mathbf{A} $ (that have
a strong negative correlation), and the third PC, that has an eigenvalue very
close to $0$, indicates that the sum of the second and third columns of
$\mathbf{A} $ is almost constant.

In terms of variable relevance, from Figure \ref{fig:Ex5_Relev_GH} we take the
following conclusions: (i) the first explanatory variable is the most relevant
and it affects the response variable in a isolated way (there is a unique
eigenvector of $\mathbf{V} $ associated with $X_{1}$); (ii) the other two
variables have similar relevance, and their relevance are strongly related
(there is a pair of eigenvectors associated with $X_{2}$ and $X_{3}$).

\begin{figure}[p]
\begin{center}
\includegraphics[scale=.4]{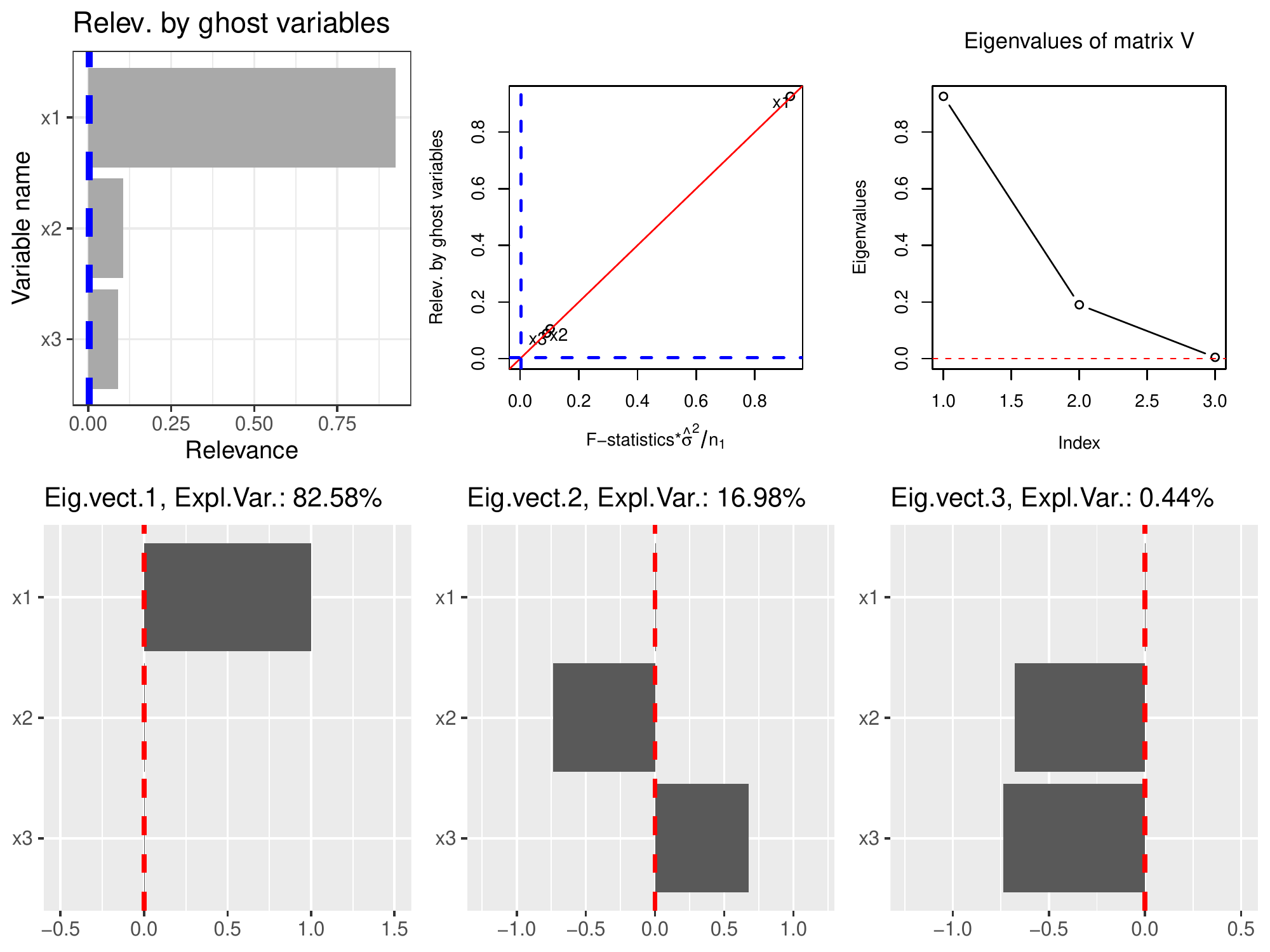}
\end{center}
\caption{Relevance by ghost variables in Example 1. The blue dashed lines mark
the critical values for testing null relevance, $\alpha=0.01$. The dashed red
lines are placed at 0.}%
\label{fig:Ex5_Relev_GH}%
\end{figure}

\begin{figure}[p]
\begin{center}
\includegraphics[scale=.4]{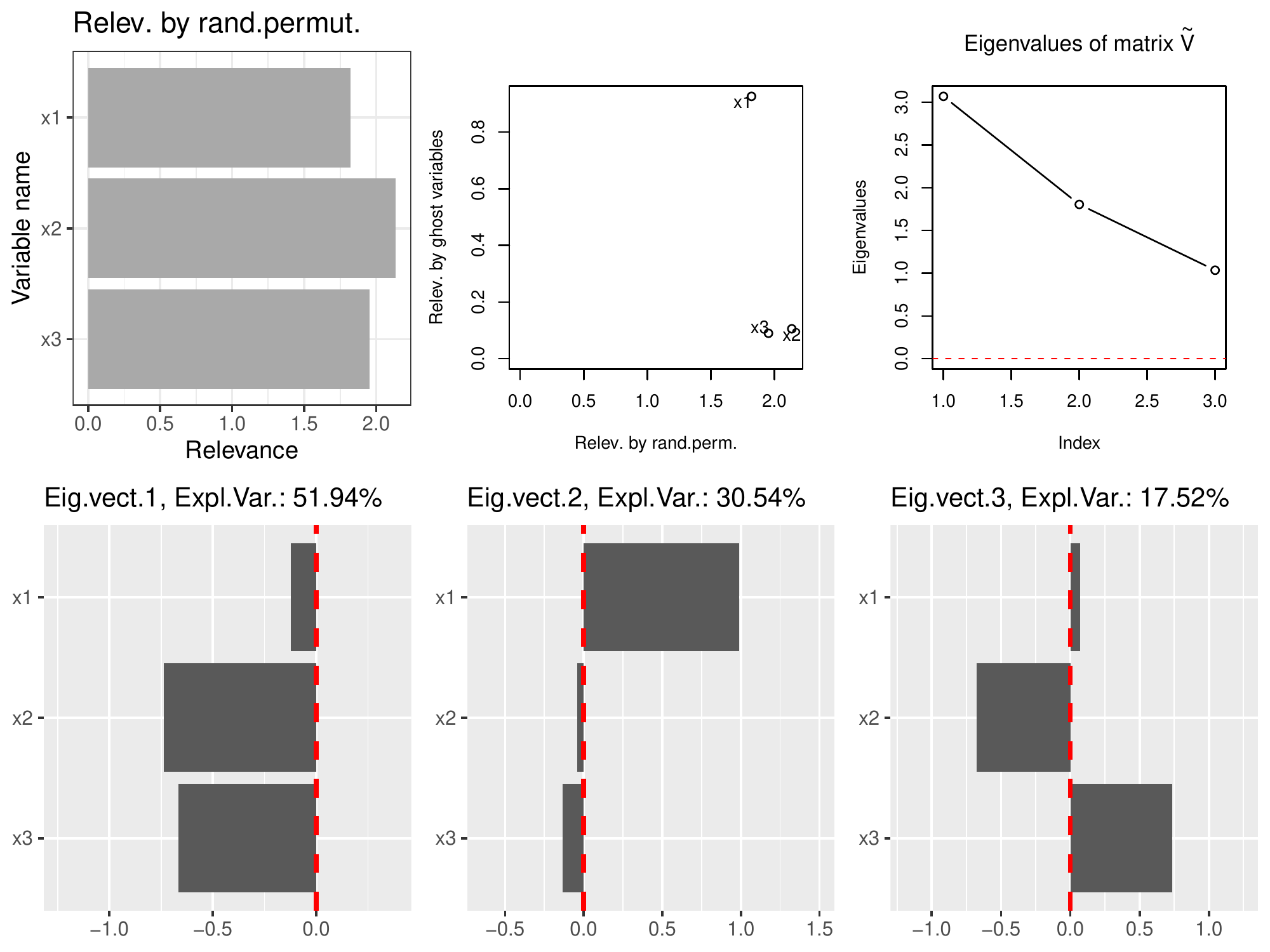}
\end{center}
\caption{Relevance by random permutations in Example 1. The dashed red lines
are placed at 0.}%
\label{fig:Ex5_Relev_RP}%
\end{figure}

Now we study the results obtained when computing the relevance by random
permutations, shown in Figure \ref{fig:Ex5_Relev_RP} (the structure of this
figure is similar to that of Figure \ref{fig:Ex5_Relev_GH}, with the only
exception that now the second plot in the first row shows the relationship
between the relevance by ghost variables and the relevance by random
permutations). We can see that the three variables have a similar relevance,
when computing it by random permutations. In this case the three eigenvalues
are far from $0$. Again there is one eigenvector associated exclusively to
$X_{1}$, and the other two are linear combinations of $X_{2}$ and $X_{3}$. Now
the first eigenvector has coefficients with the same sign in $X_{2}$ and
$X_{3}$, indicating that the second and third columns of $\tilde{ \mathbf{A}
}$ are positively correlated, as it was expected (because the element $(2,3)$
of $\tilde{ \mathbf{V} }$ has the same sign as the correlation coefficient
between $X_{2}$ and $X_{3}$).

The analysis of the relevance by random permutations have shown that (i) the
three explanatory variables have similar relevance, and that (ii) the second
and third variables have a joint influence on the response variable. We
conclude that the information provided by the ghost variables is more useful
than those obtained from random permutations.

\subsubsection*{Example 2: An additive model}
In this example we generate data according to the model
\[
Y = \cos(X_{1}) + \left\{  \frac{1}{2}(\cos(X_{2})+\cos(X_{3})) + \frac{1}{2}
X_{2} X_{3} \right\}  + \cos(X_{4}) + \cos(X_{5}) +\varepsilon,
\]
that is additive in $X_{1}$, $(X_{2},X_{3})$, $X_{4}$ and $X_{5}$. The zero
mean residual is normal with $\mbox{Var}(\varepsilon)=1/4$, and the
explanatory variables $(X_{1},X_{2},X_{3},X_{4},X_{5})$ are multivariate
normal with 0 means an covariance matrix
\[
\left(
\begin{array}
[c]{ccccc}%
1 & .95 & 0 & 0 & 0\\
.95 & 1 & 0 & 0 & 0\\
0 & 0 & 1 & .95 & .95\\
0 & 0 & .95 & 1 & .95\\
0 & 0 & .95 & .95 & 1
\end{array}
\right)  .
\]
So there are two independent blocks of regressors, $(X_{1},X_{2})$ and
$(X_{3},X_{4},X_{5})$, that are linked by the regression function because one
of the additive terms jointly depends on two variables, $X_{2}$ and $X_{3}$,
each belonging to one of the two blocks.

Training and test samples have been generated, with sizes $n_{1}=2000$ and
$n_{2}=1000$, respectively. An additive model has been fitted to the training
set using function \texttt{gam} from the R library \texttt{mgcv}
(\citeNP{Wood:2017:GAM2nd}) using the right model specification:
\verb|y~s(x1)+s(x2,x3)+s(x4)+| \verb|s(x5)|. The fitted model presents an
adjusted coefficient of determination $R^{2}=0.873$.

%\begin{verbatim}
%Family: gaussian
%Link function: identity
%Formula:
%y ~ s(x1) + s(x2, x3) + s(x4) + s(x5)
%Parametric coefficients:
%Estimate Std. Error t value Pr(>|t|)
%(Intercept)  2.41703    0.01133   213.2   <2e-16 ***
%---
%Signif. codes:  0 ?***? 0.001 ?**? 0.01 ?*? 0.05 ?.? 0.1 ? ? 1
%Approximate significance of smooth terms:
%edf Ref.df     F p-value
%s(x1)     7.437  8.442 30.09  <2e-16 ***
%s(x2,x3) 23.692 27.372 69.31  <2e-16 ***
%s(x4)     6.493  7.718 35.65  <2e-16 ***
%s(x5)     4.989  6.295 40.81  <2e-16 ***
%---
%Signif. codes:  0 ?***? 0.001 ?**? 0.01 ?*? 0.05 ?.? 0.1 ? ? 1
%R-sq.(adj) =  0.873   Deviance explained = 87.5%
%GCV = 0.26267  Scale est. = 0.25694   n = 2000%\end{verbatim}
%\end{verbatim}

%\begin{figure}
%	\begin{center}
%		\includegraphics[scale=.5]{Ex7_gam_model.pdf}
%	\end{center}
%	\caption{Ex7\_gam\_model}\label{fig:Ex7_gam_model}
%\end{figure}

\begin{figure}[p]
\begin{center}
\includegraphics[scale=.4]{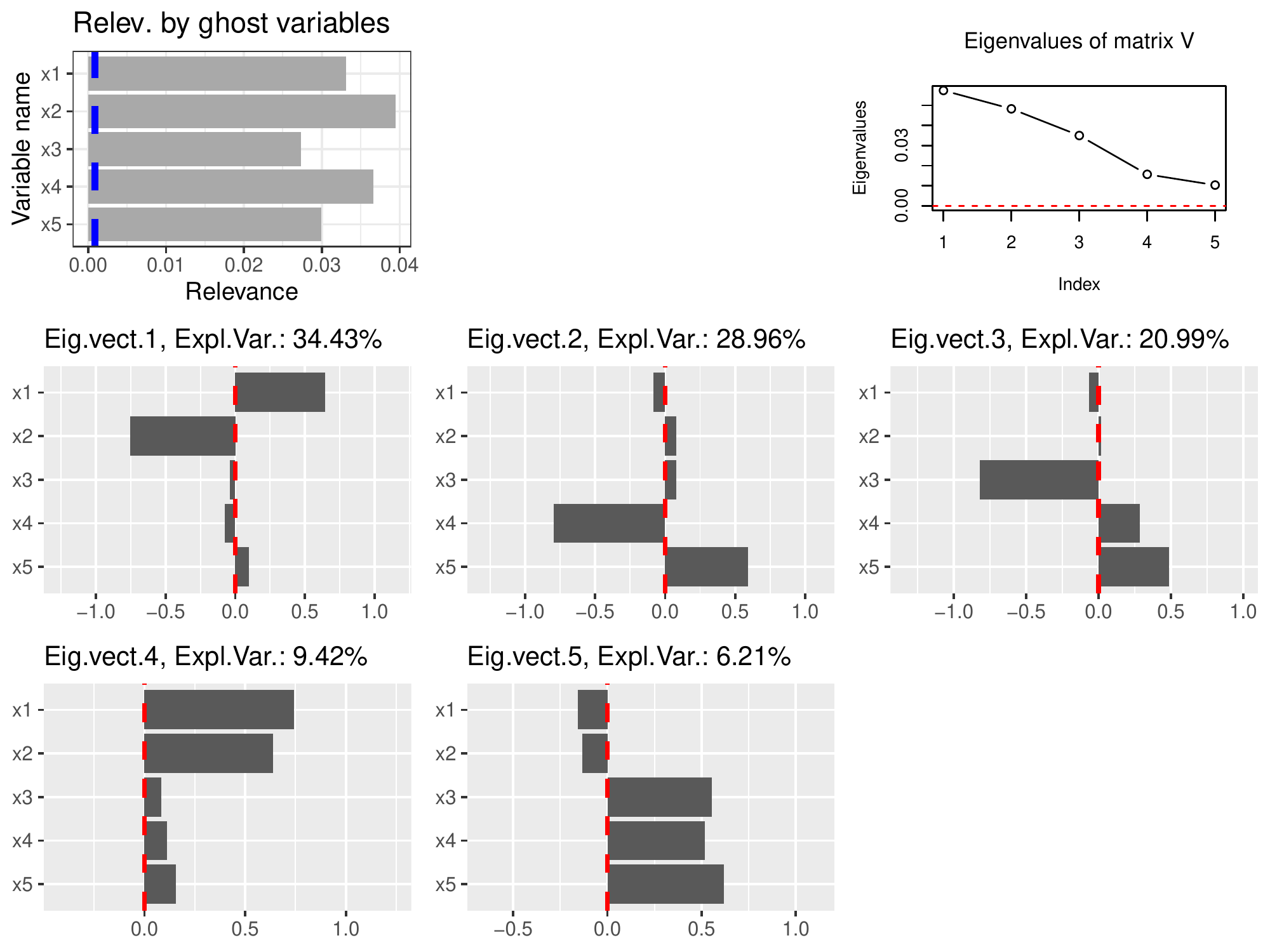}
\end{center}
\caption{Relevance by ghost variables in Example 3.}%
\label{fig:Ex7_gam_Relev_GH}%
\end{figure}

\begin{figure}[p]
\begin{center}
\includegraphics[scale=.4]{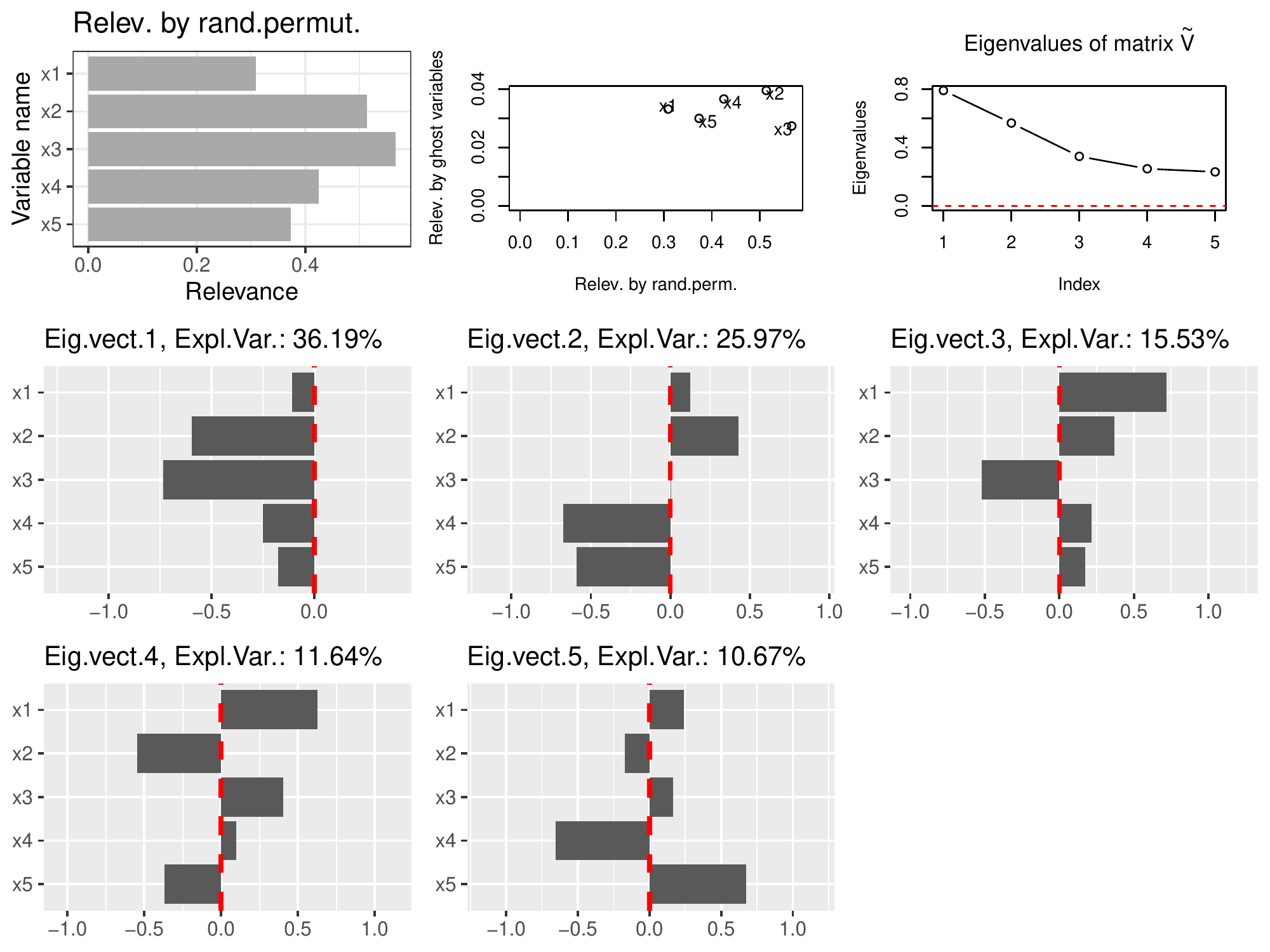}
\end{center}
\caption{Relevance by random permutations in Example 3.}%
\label{fig:Ex7_gam_Relev_RP}%
\end{figure}

Figure \ref{fig:Ex7_gam_Relev_GH} shows the relevance by ghost variables
results. All the explanatory variables have similar relevance. The
eigen-structure of relevance matrix $\mathbf{V} $ reveals the following facts.
The first and fourth eigenvectors are associated mainly with the effects of
$X_{1}$ and $X_{2}$ on the fitted function, while the other three eigenvectors
are associated with $X_{3}$, $X_{4}$ and $X_{5}$. So the presence of two
separate blocks of regressors is clear. Moreover, in the second block, the
role of $X_{3}$ is different from those of $X_{4}$ and $X_{5}$ (see
eigenvectors 2 and 3).

Regarding relevance by random permutations, the results are in Figure 
\ref{fig:Ex7_gam_Relev_RP}. In this case the relevance of $X_{2}$ and $X_{3}$
is larger than that of the other explanatory variables. The eigen-structure of
relevance matrix $\tilde{ \mathbf{V} }$ is different from that of $\mathbf{V}
$. 
The first eigenvector 
%seems to indicate 
indicates 
that variables $X_{2}$ and $X_{3}$ jointly affect the response $Y$ (something already known because the model includes the term {\tt s(x2, x3)}). 
Eigenvectors 2, 3 and 4 point out that it is
difficult to separate the effects of all the regressors on the response.
Finally, the fifth eigenvector is an indication that $X_{4}$ and $X_{5}$ have
a certain differentiated influence on $Y$.

In this example we have seen again that the results from using ghost variables
are more useful than those from random permutations: in Figure
\ref{fig:Ex7_gam_Relev_GH} we discover the two existing separate blocks of
explanatory variables, whereas in Figure \ref{fig:Ex7_gam_Relev_RP} we did not
found anything new as the joint influence over $Y$ of $X_{2}$ and $X_{3}$ was
already included in the fitted model.

\subsubsection*{Example 3: A large linear model}
We consider now a linear model with 200 standard normal explanatory variables, divided in four independent blocks of 50 variables each:
$X_{i.j}$, $i=1,\ldots,4$, $j=1,\ldots,50$.
The first and third blocks include independent variables.
The variables in each of the other two blocks are correlated:
$\mbox{cor}(X_{i.j}, X_{i.k})=0.95$ for $i=2,4$, and $j\ne k$.
The response variable $Y$ is generated as
\[
Y=\frac{1}{2}\sum_{j=1}^{50} X_{1.j} + \sum_{j=1}^{50} X_{2.j} + \varepsilon,
\]
with $\varepsilon \sim N(0,1)$.
A training sample of size $n_{1}=2000$, and a test sample of size
$n_{2}=1000$ are generated according to this model. 
Using the training set, a linear model is fitted.

The variable relevance for the 200 explanatory variables are computed, first using ghost variables and then by random permutations.
Figure \ref{fig:Ex_3_200} summarizes the findings. 
The top left panel indicates that the relevance by ghost variables (that are quite similar to the transformed $F$-statistics) is larger for the 50 variables in the fist block (uncorrelated variables related with $Y$) than for those in the second block (correlated variables related with $Y$), while the rest of variables (blocks 3 and 4, both independent from $Y$) are almost irrelevant.
The relevance by ghost variables takes into account the dependence structure, detects that 
each variable in the first block contains exclusive information about the response (this is not the case in the second block), and concludes that variables in the first block 
are more relevant than those in the second one, 
even if the corresponding coefficients are $1/2$ and $1$, respectively.
Conclusions from relevance by random permutations are different (top right panel): this method assigns larger relevance to variables in the second block than those in the first one, just because the corresponding coefficient is larger for block 2 than for block 1, without taking into account the dependence structure among the explanatory variables.

\begin{figure}[p]
\begin{center}
\hspace*{-1cm}
\includegraphics[scale=.85]{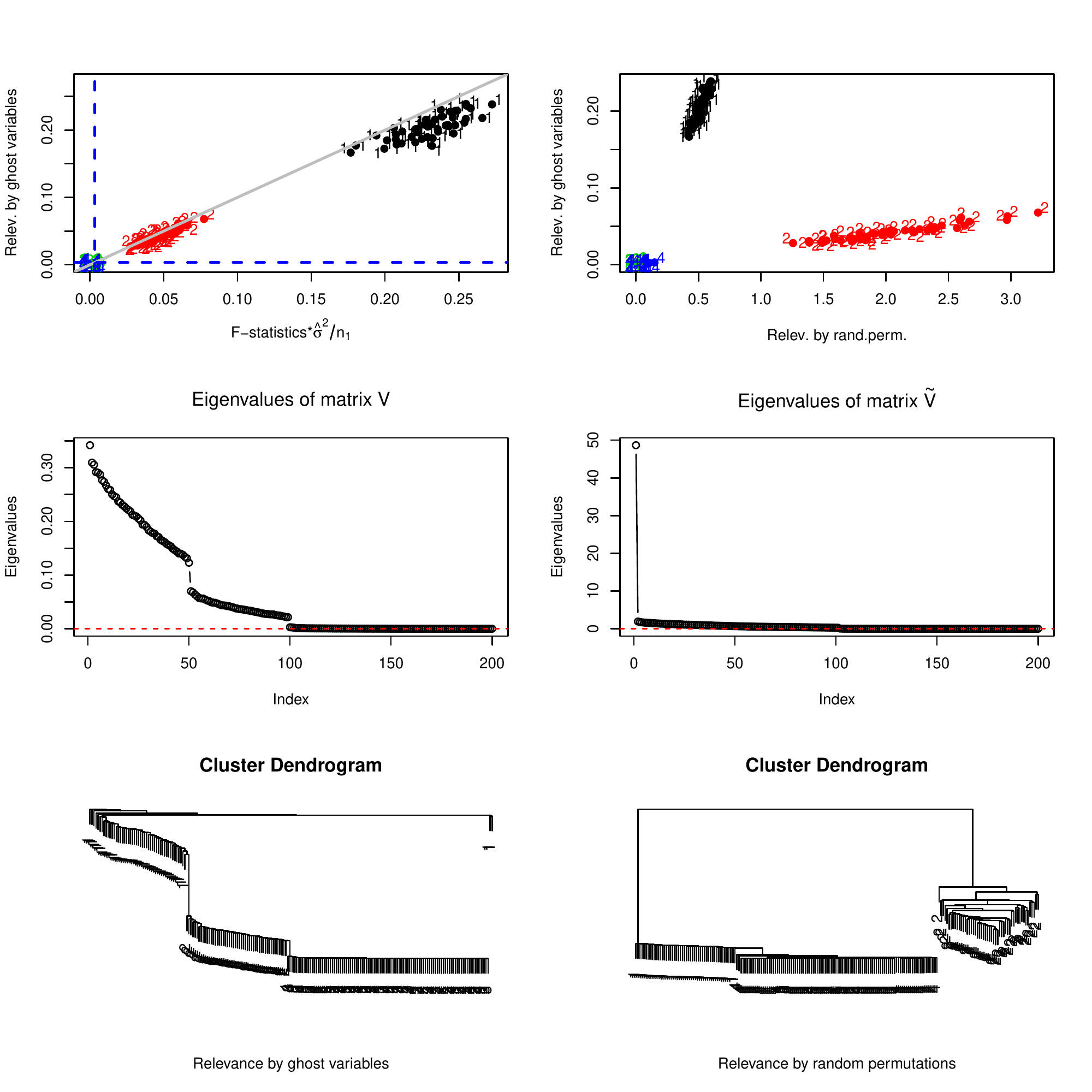}
\end{center}
\caption{Variable relevance in Example 3.} 
\label{fig:Ex_3_200}%
\end{figure}

Panels in the second row of Figure \ref{fig:Ex_3_200} show the eigenvalues of relevance matrices 
$\bV$ (left) and $\tilde{\mathbf{V}}$ (right).
In the left panel they are marked steps at eigenvalues $50$ and $99$.
The first 50 eigenvalues correspond to eigenvectors that are linear combinations of the first 50 columns of matrix $\bA$ (first block of explanatory variables), 
and the following 49 eigenvalues correspond to columns $51$ to $100$ (second block).
There are two additional eigenvectors (179 and 180, not reported here, with eigenvalues very close to zero) having almost constant weights in the 50 variables of the second block (and also significant weight in a few variables of blocks 3 and 4): they indicate that the sum of columns 51 to 100 of matrix $\bA$ is almost constant.
(This is coherent with the, not reported here, spectral analysis of the covariance of columns 51 to 100 in matrix $\bA$: the last eigenvalue is almost 0, much smaller than the 49th, and the corresponding eigenvector has almost constant coordinates).
Finally, the eigenvalues 100 to 200 are almost null. They correspond mainly to variables in blocks 3 and 4, that are irrelevant for $Y$. 

In the central right panel we can see that there is a main eigenvalue in matrix $\tilde{\mathbf{V}}$, that corresponds to an eigenvector (not reported here) that is proportional to the sum of the columns 51 to 100 of matrix $\tilde{\mathbf{A}}$. The eigenvalues 2 to 101 correspond to eigenvectors (not reported here) that are linear combinations of several of the first 100 columns of $\tilde{\mathbf{A}}$. The remain eigenvalues (almost null) correspond to variables in blocks 3 and 4.

The last two plots in Figure \ref{fig:Ex_3_200} show the dendograms corresponding to two hierarchical clustering of the 200 explanatory variables. In the first one (bottom left panel) the squared elements of relevance matrix $\bV$ are taken as similarities between variables, while matrix $\tilde{ \mathbf{V} }$ has been used in the second one (bottom right panel). Both dendograms are able to detect blocks of variables 1 and 2, while blocks 3 and 4 appear as a unique cluster.

\subsection{A real data example: Rent housing prices}

\label{sec:RealData} We present a real data example on rental housing, coming
from the Spanish real estate search portal Idealista (\url{www.idelista.com})
which allows customers to search for housing based on several criteria
(geography, number of rooms, price, and various other filters) among the
offers posted by other customers. We started working from the data set
downloaded from \url{www.idelista.com} by Alejandro Germ\'an-Serrano on
February 27th 2018
%(Me parece dar datalles no necesarios)
(available at \url{https://github.com/seralexger/idealista-data}; accessed
April 12th 2019). This data set contained 67201 rows (posted rental offers
actives at the download date) and 19 attributes, corresponding to all cities
in Spain. Some offers where activated for the first time in 2010.

We have selected posts corresponding to Madrid and Barcelona (16480 rows) and
finally work with 17 variables (some of them already in the original data set,
other calculated from the original attributes) listed in Table
\ref{tb:var_rent_hous}.

%\begin{verbatim}
%##  [1] "price"                           "Barcelona"
%##  [3] "categ.distr"                     "type.chalet"
%##  [5] "type.duplex"                     "type.penthouse"
%##  [7] "type.studio"                     "floor"
%##  [9] "hasLift"                         "floorLift"
%## [11] "size"                            "exterior"
%## [13] "rooms"                           "bathrooms"
%## [15] "hasParkingSpace"                 "isParkingSpaceIncludedInPrice"
%## [17] "log_Days_since_first_activation"	
%\end{verbatim}

\begin{table}[p]
\caption{List of variables used in the rent housing prices examples. The
response variable is \texttt{price} (in logarithms), and other are the
explanatory variables. The district price level indicator \texttt{categ.distr}
has been computed in Barcelona and Madrid separately. For each district (there
are $N_{B}=10$ districts in Barcelona and $N_{M}=21$ in Madrid) the third
quartile of \texttt{price} is computed. Then these $N$ values (where $N=N_{B}$
or $N_{M}$) are classified by their own quartiles, and values $-3$, $-1$, $1$
and $3$ are assigned to them accordingly. Finally this district value is used
to define \texttt{categ.distr} for all the houses in each district. }%
\label{tb:var_rent_hous}%
\begin{tabular}
[c]{rp{12cm}}
& \\
\texttt{price} & Monthly rental price, the response variable (in logs).\\
\texttt{Barcelona} & 1 for houses in Barcelona, 0 for those in Madrid.\\
\texttt{categ.distr} & An indication of the district price level, taking the
values $-3$, $-1$, $1$ and $3$. See the caption for details.\\
& \\
\texttt{type.chalet} & These 4 variables are the binarization of the\\
\texttt{type.duplex} & original variable \texttt{type} with 5 levels:\\
\texttt{type.penthouse} & flat (the most frequent), chalet, duplex,\\
\texttt{type.studio} & penthouse and studio.\\
& \\
\texttt{floor} & Floor where the house is located.\\
\texttt{hasLift} & 1 if the house has lift, 0 otherwise.\\
\texttt{floorLift} & \texttt{abs(floor)*(1-hasLift)}\\
\texttt{size} & Surface, in squared meters.\\
\texttt{exterior} & 1 if the house is exterior, 0 otherwise.\\
\texttt{rooms} & Number of bedrooms.\\
\texttt{bathrooms} & Number of bathrooms.\\
\texttt{hasParkingSpace} & 1 if the house has a parking space, 0 otherwise.\\
\texttt{ParkingInPrice} & 1 if the parking space is included in the price, 0
otherwise.\\
\texttt{log\_activation} & logarithm of the number of days since the first
activation of the post.\\
&
\end{tabular}
\end{table}

In order to predict the logarithm of prices as a function of the other 16
explanatory variables, we have fitted three predictive models: a linear
regression, an additive model, and a neural network. For each model, the
variables relevance has been computed by ghost variables and by random
permutations (a 70\% of the data are used as training set, and the rest as
test set). For both, the linear and the additive model, the standard output
offers a rich information about the statistical significance of each
explanatory variable. Then the relevance analysis represents a complementary
information, that in most cases confirms the standard one, although matrix
relevance can add new lights. The situation is different for the neural
network model: in this case the relevance analysis will provided genuine new
insights on the importance of the explanatory variables, or groups of them.

We show below the results for the linear model and for the neural network. The
results for the additive model are not included here (they are accessible as
online material) because they are not very different from those obtained in
the linear model.

\subsubsection*{Linear model}

Table \ref{tb:out.lm.price} shows the standard output of the fitted linear
model (we have used the R function \texttt{lm}). Only three variables are not
significant at level 0.001. There are 7 variables with $t$-value larger than
10 in absolute value: \texttt{Barcelona}, \texttt{categ.distr}, \texttt{floor}%
, \texttt{log.size} (this one being the most significant variable),
\texttt{rooms}, \texttt{bathrooms}, and \texttt{log\_activation}. The adjusted
coefficient of determination $R^{2}$ is $0.7599$.

\begin{table}[ptb]
\caption{Rent housing prices: Standard output of the linear model.}%
\label{tb:out.lm.price}
{\small 
\begin{verbatim}
## lm(formula = log(price) ~ ., data = rhBM.price[Itr, ])
## 
## Residuals:
##      Min       1Q   Median       3Q      Max 
## -1.72437 -0.17604 -0.02316  0.15692  1.45330 
## 
## Coefficients:
##                   Estimate Std. Error t value Pr(>|t|)    
## (Intercept)      3.8169658  0.0344596 110.766  < 2e-16 ***
## Barcelona        0.1126307  0.0052554  21.431  < 2e-16 ***
## categ.distr      0.1169468  0.0033806  34.593  < 2e-16 ***
## type.chalet     -0.0846942  0.0203106  -4.170 3.07e-05 ***
## type.duplex     -0.0177992  0.0151519  -1.175  0.24013    
## type.penthouse   0.0428160  0.0101282   4.227 2.38e-05 ***
## type.studio     -0.0762350  0.0139991  -5.446 5.27e-08 ***
## floor            0.0128181  0.0009696  13.220  < 2e-16 ***
## hasLift          0.0480363  0.0118432   4.056 5.02e-05 ***
## floorLift       -0.0013898  0.0044109  -0.315  0.75270    
## log.size         0.6186668  0.0090654  68.245  < 2e-16 ***
## exterior        -0.0372539  0.0068935  -5.404 6.64e-08 ***
## rooms           -0.0501949  0.0034204 -14.675  < 2e-16 ***
## bathrooms        0.1431973  0.0047167  30.359  < 2e-16 ***
## hasParkingSpace -0.0074934  0.0129971  -0.577  0.56426    
## ParkingInPrice  -0.0408757  0.0138863  -2.944  0.00325 ** 
## log_activation   0.0418803  0.0018552  22.574  < 2e-16 ***
## ---
## Signif. codes:  0 '***' 0.001 '**' 0.01 '*' 0.05 '.' 0.1 ' ' 1
## 
## Residual standard error: 0.2647 on 11519 degrees of freedom
## Multiple R-squared:  0.7602, Adjusted R-squared:  0.7599 
## F-statistic:  2282 on 16 and 11519 DF,  p-value: < 2.2e-16
\end{verbatim}
}
\end{table}

The relevance by ghost variables results are shown in Figure
\ref{fig:VarRlevIdealista_Gh}. We can see (first row, first column plot) that
the 7 most relevant variables are the seven we cited before (those with
largest $t$-values in absolute value). This is a consequence of the existing
relation (Theorem \ref{th:Rel_Gh}) between the relevance by ghost variables
and $F$-values (the squares of $t$-values) in the linear model. This plot
shows that \texttt{log.size} is the most relevant variable, followed by
\texttt{categ.distr}, \texttt{bathrooms}, \texttt{log\_activation}, and
\texttt{Barcelona}. The relevance of \texttt{rooms} and \texttt{floor} 
is much lower.

Regarding the analysis of the relevance matrix $\mathbf{V}$, only the
eigenvectors explaining more than 1\% of the total relevance are plotted. The
first eigenvector accounts for the 60\% of total relevance, and it is
associated with the size of houses (\texttt{log.size}, \texttt{bathrooms} and
\texttt{rooms}). The second eigenvector (15\% of total relevance) is mostly
related with the district price level (\texttt{categ.distr}) and with some
small interaction of \texttt{bathrooms} and \texttt{rooms}. The third
eigenvector is a combination of the six most relevant variables, with
\texttt{bathrooms} having the largest weight. The other 4 eigenvector (from
5th to 7th) seem to be associated mainly with one explanatory variable
(\texttt{log\_activation}, \texttt{Barcelona}, \texttt{floor} and
\texttt{rooms}, respectively) with little interaction with other.

\begin{figure}[p]
\begin{center}
\vspace*{-1cm} \hspace*{-1.5cm}
\includegraphics[scale=.42]{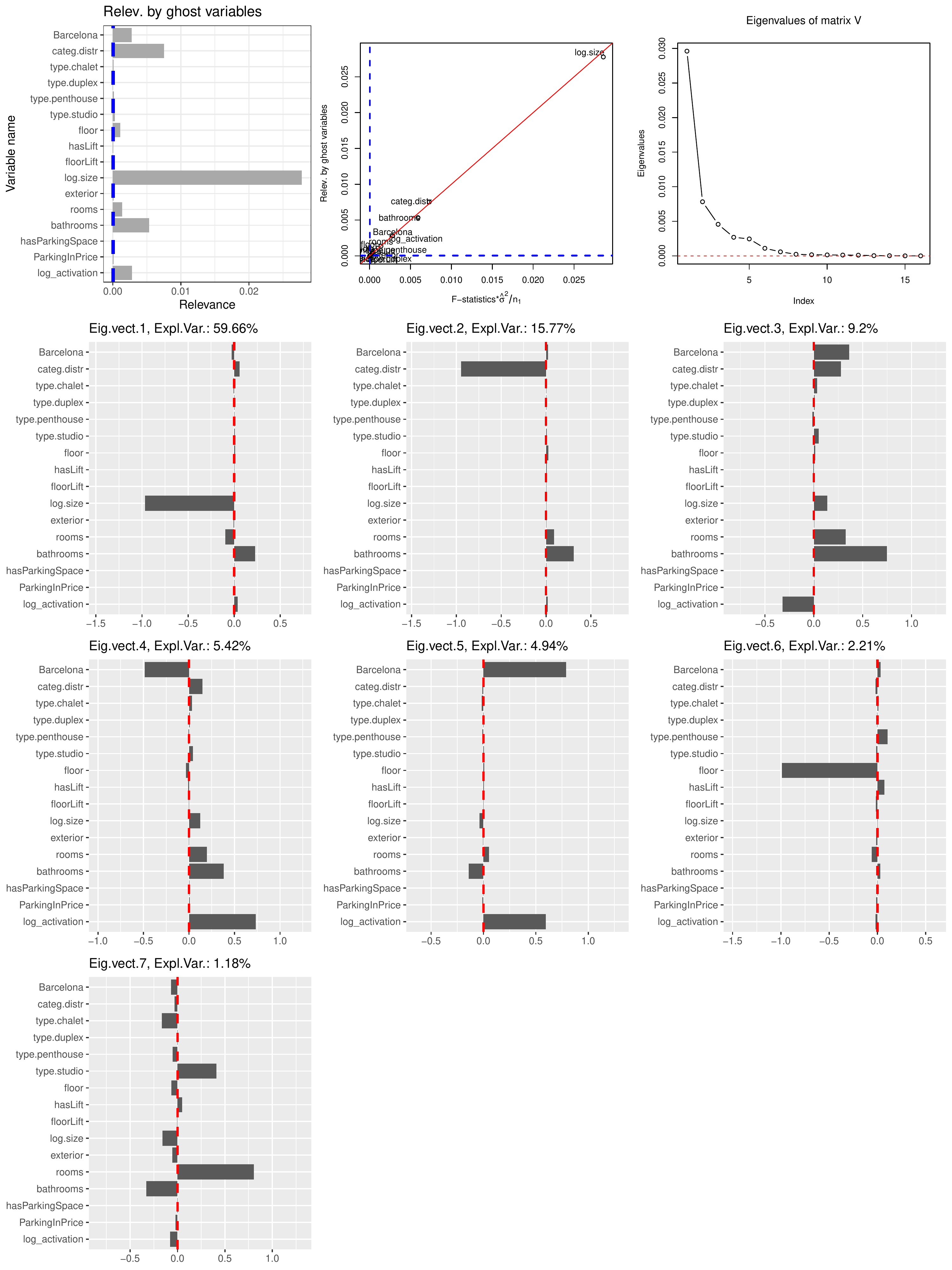}
\end{center}
\caption{Rent housing prices: Relevance by ghost variables for the linear
model. Only the eigenvectors explaining more than 1\% of the total relevance
are plotted.}%
\label{fig:VarRlevIdealista_Gh}%
\end{figure}

The analysis of relevance by random permutations (Figure
\ref{fig:VarRlevIdealista_RP}) agrees, in general terms, with the previous
one, although there are small differences. Now the most relevant variables are
\texttt{log.size} and \texttt{bathrooms}, with \texttt{categ.distr},
\texttt{rooms}, \texttt{log\_activation}, and \texttt{Barcelona} somehow
relegated. There are 6 eigenvectors responsible each for more than 1\% of the
total relevance. Each one is mainly associated with one of the previously
mentioned 6 explanatory variables, in the order of citation. The main
difference with the relevance by random permutations is that, when using
random permutations, no interactions between explanatory variables have been detected.

\begin{figure}[p]
\begin{center}
%	\vspace*{-1cm}
\hspace*{-1.5cm} \includegraphics[scale=.42]{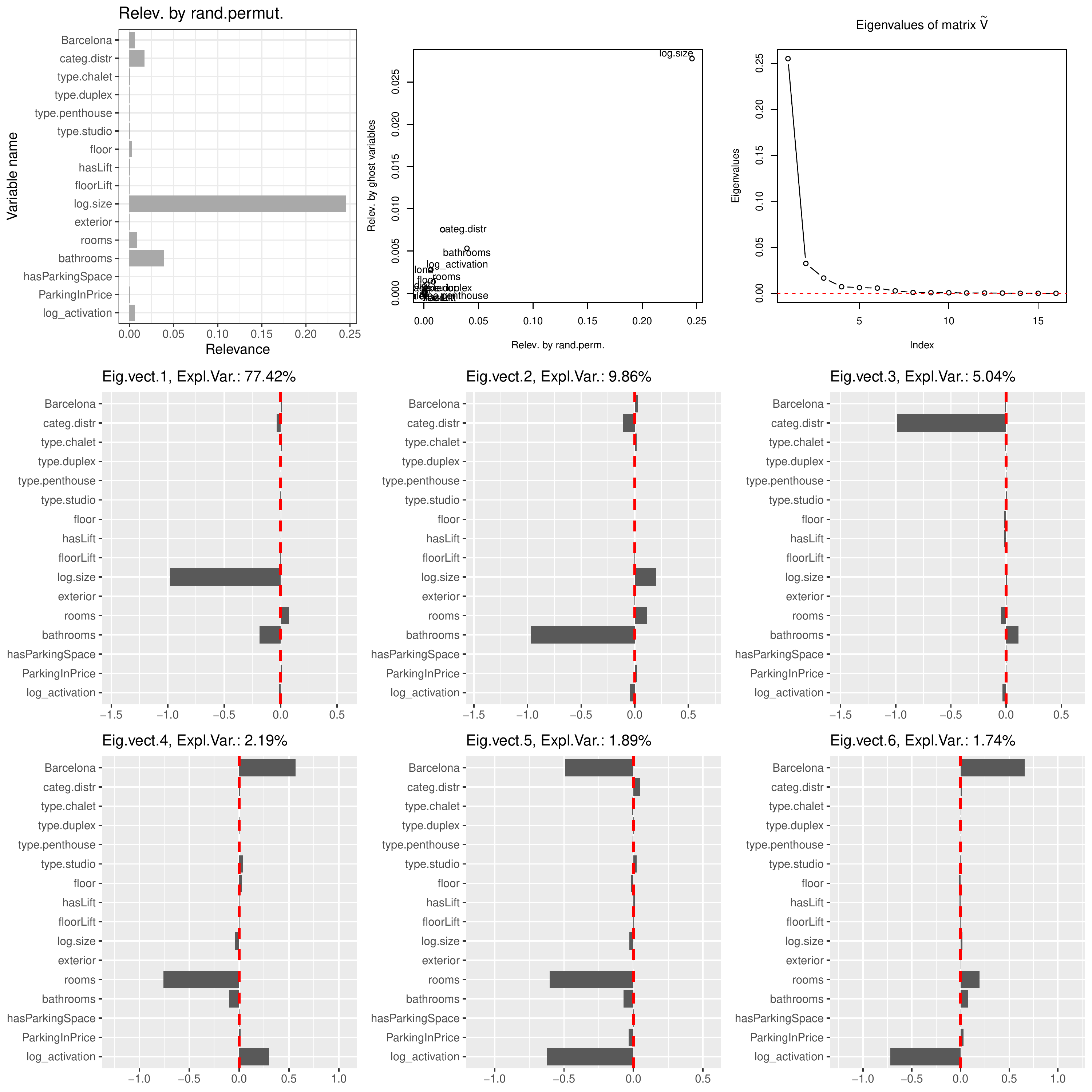}
\end{center}
\caption{Rent housing prices: Relevance by random permutations for the linear
model. Only the eigenvectors explaining more than 1\% of the total relevance
are plotted.}%
\label{fig:VarRlevIdealista_RP}%
\end{figure}

\subsubsection*{Neutral network}

A one-hidden-layer neural network has been fitted using the \texttt{nnet}
function from the R package \texttt{nnet} (\citeNP{nnet:2002}). The explanatory
variable were centered and scaled before the fitting. Tuning parameters,
\texttt{size} (number of neurons in the hidden layer) and \texttt{decay}
parameter, are chosen using \texttt{caret} (\citeNP{caret:2018}) by 10-fold cross
validation (in the training set). The candidates values for \texttt{size} were
10, 15, and 20, and they were 0, 0.1, 0.3, and 0.5 for \texttt{decay}. Finally
the chosen values were \texttt{size=10} and \texttt{decay=0.5}. With these
values, the whole training sample was used to fit the neural network and the
results were stored in the object \texttt{nnet.logprice}.

Table \ref{tb:out.nn.price} shows the little information obtained when
printing the output of the \texttt{nnet} function. Additionally, a quantity
similar to the coefficient of determination for this fit has been computed and
printed, with a value of $0.79$. It is a little bit larger than for the linear
and the additive models, so neural network provides the best fit in the sense
of maximum coefficient of determination. You can see that the output in Table
\ref{tb:out.nn.price} does not provide any insight about which explanatory
variables are more responsible for that satisfactory fit. Relevance measures
will be of help in this respect.

\begin{table}[ptb]
\caption{Rent housing prices: Output of the neural network model.}%
\label{tb:out.nn.price}
{\small 
\begin{verbatim}
# > nnet.logprice
#
# a 16-10-1 network with 181 weights
# inputs: Barcelona categ.distr type.chalet type.duplex type.penthouse  
# type.studio floor hasLift floorLift log.size exterior rooms bathrooms 
# hasParkingSpace ParkingInPrice log_activation 
#
# output(s): log(price) 
#
# options were - linear output units  decay=0.5
#
# > 1-(mean(nnet.logprice$residuals^2)/var(log(scaled.rhBM.price.tr$price)))
# [1] 0.7912618
\end{verbatim}
}
\end{table}

Results on relevance by ghost variables for the fitted neural network are
shown in Figure \ref{fig:VarRlevIdealista_nn_Gh}. In addition to the 7
variables that were relevant for the linear and additive models, now the 4
variables related with the type of house appear with small but not null
relevance. The same can be said for the variable \texttt{ParkingInPrice}.
Again, the two most relevant variables are \texttt{log.size} and
\texttt{categ.distr}, in this order.

There are 10 eigenvectors with percentages of explained relevance greater than
1\% (the first 9 are shown in Figure \ref{fig:VarRlevIdealista_nn_Gh}). The first eigenvector (47\% of total relevance) corresponds almost
entirely to \texttt{log.size}, and the second one (20\%) to
\texttt{categ.distr} (in fact, the first 3 eigenvectors are similar to those in the linear model, Figure \ref{fig:VarRlevIdealista_Gh}). The variables \texttt{bathrooms} and \texttt{rooms}
appear together in several eigenvectors, always accompanying other variables.
Similar joint behavior present \texttt{Barcelona} and \texttt{log\_activation}. 
Variables referring to different types of houses appear at eigenvectors 4th
to 9th (except the 6th one). The 7th eigenvector is mainly related with \texttt{floor}.

\begin{figure}[p]
\begin{center}
\vspace*{-.5cm} % \hspace*{-1.5cm}
\includegraphics[scale=.42]{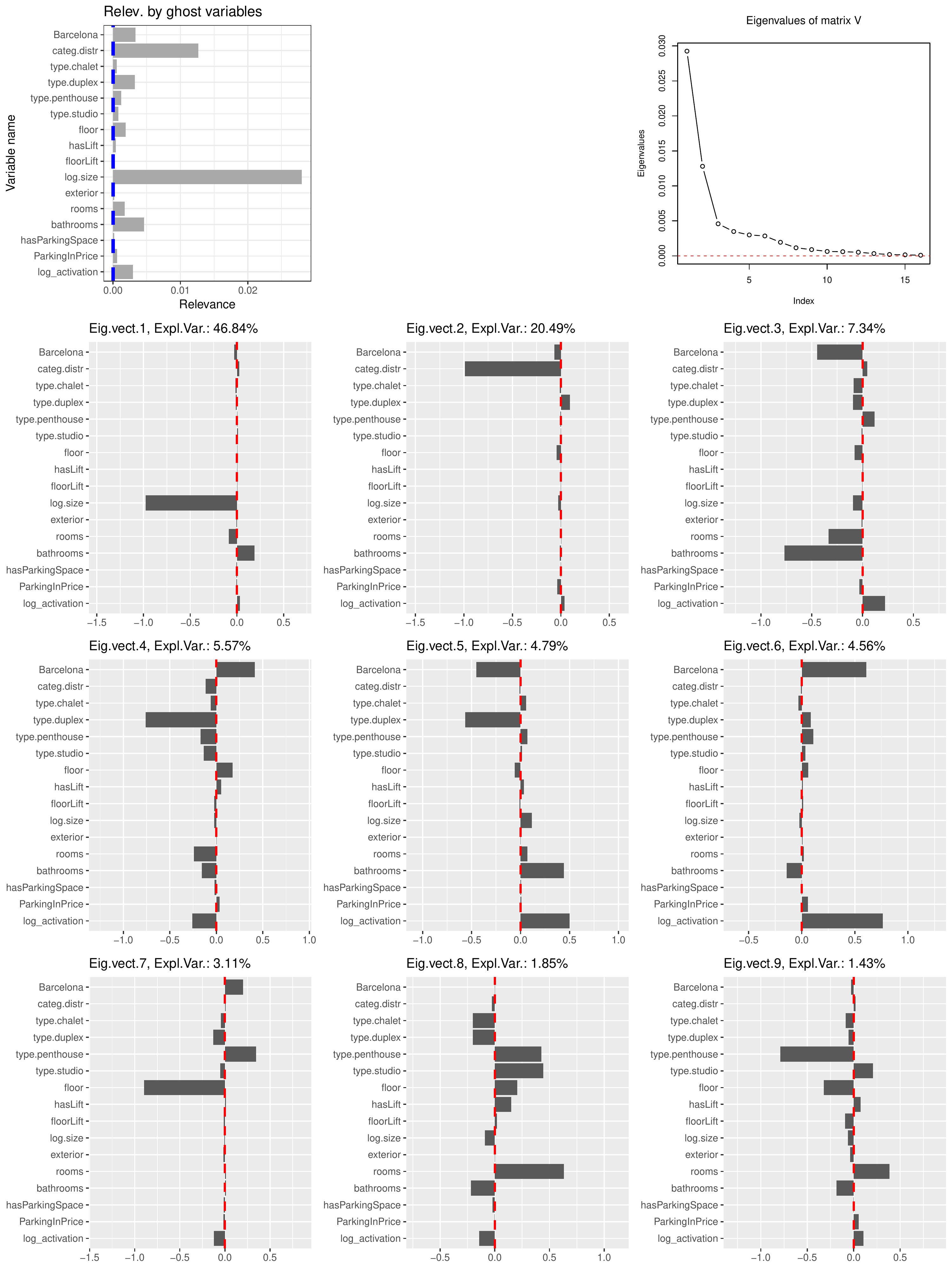}
\end{center}
\caption{ Relevance by ghost variables for the neural network model. }%
\label{fig:VarRlevIdealista_nn_Gh}
\end{figure}

Let us analyze now the relevance by random permutations (Figure \ref{fig:VarRlevIdealista_nn_RP}). In this case all the
variables, except \texttt{type.chalet}, has non-null relevance. The results
are quite different from what we have seen before. Now the most relevant
variable is \texttt{type.penthouse}, followed first by \texttt{floor} and
\texttt{bathrooms}, and then by \texttt{log\_activation},
\texttt{ParkingInPrice}, \texttt{rooms}, \texttt{categ.distr},
\texttt{Barcelona}, and \texttt{type.studio}. Interestingly, \texttt{log.size}
has a very low relevance now, contrary to what we have seen before.

Variables \texttt{type.penthouse}, \texttt{floor} and \texttt{bathrooms}
dominate the first 3 eigenvectors of the relevance matrix $\tilde{ \mathbf{V}
}$ (jointly accounting for almost 60\% of the total relevance). 
Eigenvectors 4 to 9 %, and 12, 
seem to be each related with just one explanatory variable.
%Eigenvectors 10 and 11 appear jointly related with \texttt{log.size} and \texttt{hasLift}.

\begin{figure}[p]
\begin{center}
\vspace*{-.5cm} % \hspace*{-1.5cm}
\includegraphics[scale=.42]{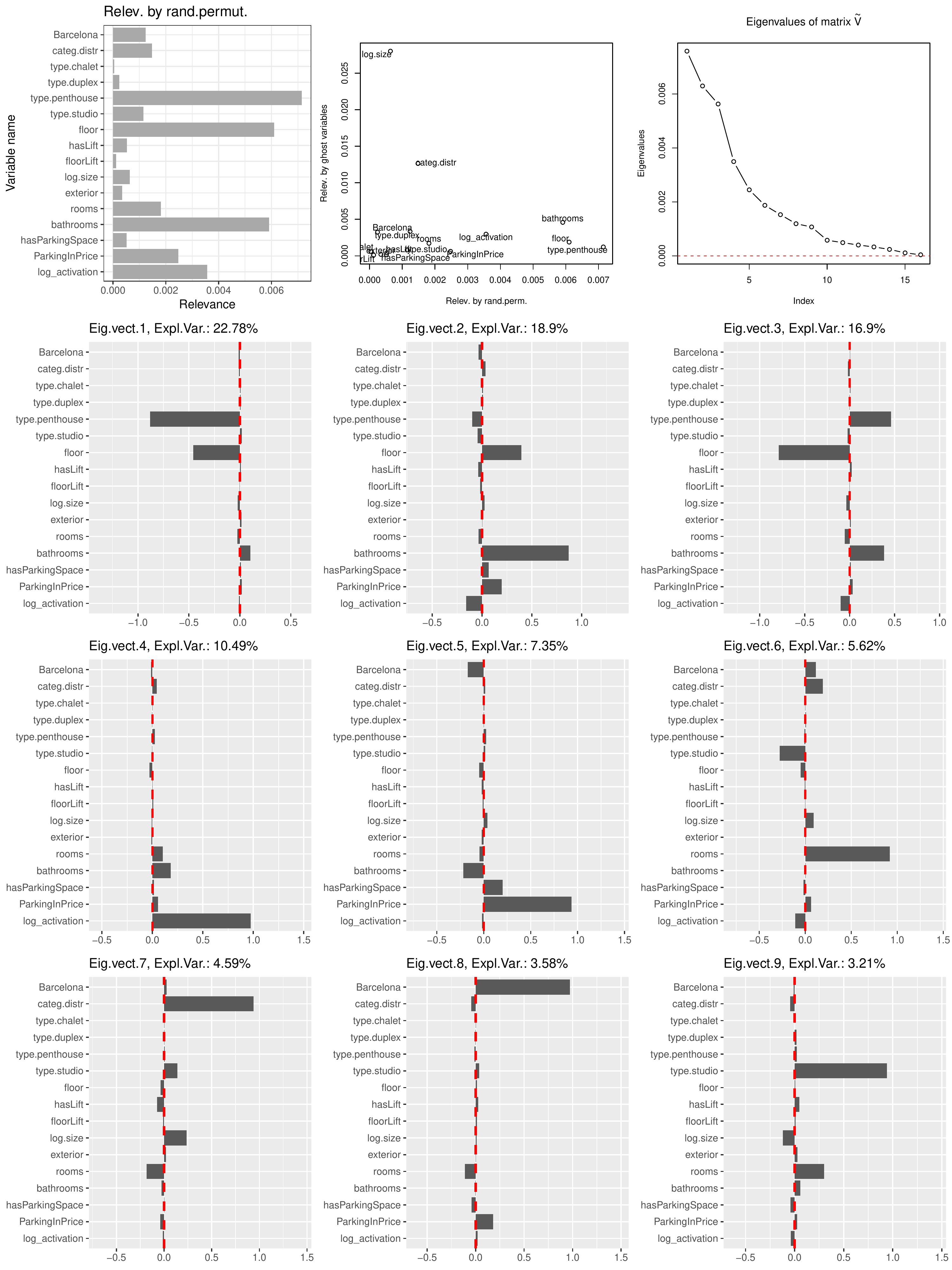}
\end{center}
\caption{Relevance by random permutations for the neural network model.}%
\label{fig:VarRlevIdealista_nn_RP}
\end{figure}

When computing relevances by Ghost variables, the most relevant one is 
{\tt log.size},
while by random permutations are 
{\tt type.penthouse}, {\tt floor} and  {\tt bathrooms}.
In order to determine which of these two methods 
better determines the really important variables, we compute the relevance by omission of these four variables removing them (one at a time) from the set of explanatory variables and then training again the neural network (observe that this computations are time consuming, because the optimal tuning parameters have to be selected each time we remove an explanatory variable). 
The most important variables are those with a larger relevance by omission. 
Table \ref{tb:VarRlevIdealista_nn} shows these values. 
It follows that the most important variable (according to relevance by omission) is  {\tt log.size}, the same result obtained whit relevance by Ghost variables. 
The other variables (that were labeled as relevant by random permutations) have a very low value of relevance by omission, compared with {\tt log.size}.
The last row of Table \ref{tb:VarRlevIdealista_nn} contains the relevance by omission when removing simultaneously the three most relevant variables according to random permutations. It follows that the relevance by omission of this group of variables is less than 32\% of the  relevance by omission of {\tt log.size}.
%
%We conclude that  the random permutation approach does not seem to be useful for NN in this example, whereas the ghost variable relevance measure is in consonance with relevance by omission (much more time consuming).
%Moreover, the results on ghost variable relevance are very consistent in the three prediction models considered for this data set.

\begin{table}
\caption{
%Quality of predictions $\hat{y}$ for the logarithm of price $y$ obtained by different neural networks models, and measured as $\mbox{corr}^2(\hat{y},y)$.
Relevance by omission of several explanatory variables and a group of them when fitting neural network models. 
}
\label{tb:VarRlevIdealista_nn}
\begin{center}
\begin{tabular}{r l}
Omitted variables & Relevance by omission\\ \hline 
{\tt log.size}         & .0321 \\
{\tt type.penthouse}   & .0046 \\
{\tt floor}            & .0057 \\
{\tt bathrooms}        & .0085 \\
{\tt type.penthouse}, {\tt floor} and {\tt bathrooms} & .0102 \\ \hline 
\end{tabular}
\end{center}
\end{table}
%0.032069654 0.004643739 0.005711514 0.008469302 0.010204840
%.0321 .0046 .0057 0.0085 0.0102

Summarizing the relevance results for the neural network, it can be said that
they present certain differences with respect to linear and additive models.
First, there are completely different results with ghost variables and with
random permutation. Second, the variables found as most important with random
permutation were among the less important in linear and additive
models. Third in the relevance by ghost variables more variables are
considered but the order of the variables by relevance does not change with
respect to the results of linear and additive models. 
Fourth, the random permutation approach does not seem to be useful for NN in this example, whereas the ghost variable approach find the same results as relevance by omission in a much more computing efficiency.

\section{Conclusions and further research}

\label{sec:concl}

We have defined the relevance of a variable in a complex model by its
contribution to out of sample prediction and proposed a new way to measure
this contribution: to compare the predictions of the model that includes this
variable to those of a model in which this variable is substituted by its
ghost variable, that is, an estimation of the variable by using the other
explanatory variables. We have also shown that this approach is more useful
than usual procedures of deleting the variable or using its random
permutation. We have proved that in linear regression this approach is related
to the $F$-statistic used to check the significance of the variable and
therefore the computation of the relevance of a variable in a complex
prediction model is an extension of the significance concept to other models
in which this concept is not usually considered. 
With many dependent variables,
the relevance of a variable by itself is less useful and it is important to
consider the joint contribution of sets of variables to the out of sample
prediction. We have introduced the relevance matrix as a way to find groups of
related variables with important joint effects in out of sample
prediction. In the linear model, we have proved the relationship between the
relevance matrix and the matrix of partial correlations of the explanatory variables.

The research in this paper has open many problems for further investigation.
As was indicated in the Introduction, the ghost variables are related to the
knockoff variables for controlling the false discovery rate in large
regression problems and this possible connection and the use of the
relevance matrix as a screening device requires further research.
Also, there
are some aspects in the application of these ideas to Neural Network
prediction function that should be further analyzed. First, the estimation of
the ghost variables can be also carried out by some NN models, to allow for
non linear relations among the explanatory variables. See for instance
\shortciteN{sanchezimproving} for the estimation of missing values in NN.
Second, a comparison of this approach to the local sensitivity analysis
looking at the derivatives of the prediction function, that is often used in
NN, will be important. Third, the extensions of these ideas to binary (or
multi class) classification is an important problem for discrimination.

% Acknowledgements should go at the end, before appendices and references
\section*{Acknowledgements}
This research was supported by the Spanish Ministerio de Econom\'{i}a, 
  Industria y Competitividad (MINECO), and Fondo Europeo de Desarrollo
  Regional (FEDER, UE) grants MTM2017-88142-P (Pedro Delicado) and
  ECO2015-66593-P (Daniel Pe\~na). We are grateful to Alejandro Germ\'an-Serrano 
  for providing us with access to Idealista data.

\bigskip
\begin{center}
{\large\bf SUPPLEMENTARY MATERIAL}
\end{center}

\begin{description}

%\item[Title:] Brief description. (file type)

\item[Additional results and data analysis:] Results on relevance by omission, and additive model for the real data example. (See at the end of this .pdf file)

\item[R-scripts and datasets:] 
The R-scripts containing code to reproduce the computations and graphics in the article
can be found at \url{https://github.com/pedrodelicado/GhostVariables}.
The datasets used as examples in the article are also available there. 

\end{description}

\bibliographystyle{Chicago}
\bibliography{RelevMatrix}

%\newpage
\appendix

\section{Appendix}

\label{sec:appendix}

\subsection{Updating formula for the OLS estimator}

\label{Ap:updating_formula} Let $\mathbf{b}\in\mathbb{R}^{p}$, $c\in
\mathbb{R}$, and $\mathbf{A}\in\mathbb{R}^{p\times p}$, an invertible matrix.
The expression of the inverse of an invertible block matrix is as follows:
\[
\left(
\begin{array}
[c]{cc}%
\mathbf{A} & \mathbf{b}\\
\mathbf{b}^{T} & c
\end{array}
\right)  ^{-1}=\left(
\begin{array}
[c]{cc}%
\mathbf{A}^{-1}+\frac{1}{k}\mathbf{A}^{-1}\mathbf{b}\mathbf{b}^{T}%
\mathbf{A}^{-1} & -\frac{1}{k}\mathbf{A}^{-1}\mathbf{b}\\
-\frac{1}{k}\mathbf{b}^{T}\mathbf{A}^{-1} & \frac{1}{k}%
\end{array}
\right)  ,
\]
where $k=c-\mathbf{b}^{T}\mathbf{A}^{-1}\mathbf{b}$.

Consider the linear regression with responses in vector $\mathbf{y}
\in\mathbb{R} ^{n}$ and regression matrix $( \mathbf{X} , \mathbf{z}
)\in\mathbb{R} ^{n\times(p+1)}$. The OLS estimated regression coefficients are
given by
\[
{\binom{\hat{\beta}_{x}}{\hat{\beta}_{z}}}=\left(
\begin{array}
[c]{cc}%
\mathbf{X}^{T}\mathbf{X} & \mathbf{X}^{T}\mathbf{z}\\
\mathbf{z}^{T}\mathbf{X} & \mathbf{z}^{T}\mathbf{z}%
\end{array}
\right)  ^{-1}{\binom{\mathbf{X}^{T}}{\mathbf{z}^{T}}}\mathbf{y}%
\]
Then, using the formula for the inverse of a block matrix, we have that
\[
{\binom{\hat{\beta}_{x}}{\hat{\beta}_{z}}}=\left(
\begin{array}
[c]{cc}%
(\mathbf{X}^{T}\mathbf{X})^{-1}+\frac{1}{k}(\mathbf{X}^{T}\mathbf{X}%
)^{-1}\mathbf{X}^{T}\mathbf{z}\mathbf{z}^{T}\mathbf{X}(\mathbf{X}%
^{T}\mathbf{X})^{-1} & -\frac{1}{k}(\mathbf{X}^{T}\mathbf{X})^{-1}%
\mathbf{X}^{T}\mathbf{z}\\
-\frac{1}{k}\mathbf{z}^{T}\mathbf{X}(\mathbf{X}^{T}\mathbf{X})^{-1} & \frac
{1}{k}%
\end{array}
\right)  {\binom{\mathbf{X}^{T}\mathbf{y}}{\mathbf{z}^{T}\mathbf{y}}}%
\]
with
\[
k=\mathbf{z}^{T}\mathbf{z}-\mathbf{z}^{T}\mathbf{X}(\mathbf{X}^{T}%
\mathbf{X})^{-1}\mathbf{X}^{T}\mathbf{z}= \mathbf{z}^{T}\mathbf{z} -
\hat{\mathbf{z}}_{x}^{T}\hat{\mathbf{z}}_{x}= (\mathbf{z} - \hat{\mathbf{z}%
}_{x})^{T}(\mathbf{z} - \hat{\mathbf{z}}_{x}),
\]
where $\hat{\mathbf{z}}_{x}=\mathbf{H}_{x}\mathbf{z}$ and $\mathbf{H}%
_{x}=\mathbf{X}(\mathbf{X}^{T}\mathbf{X})^{-1}\mathbf{X}^{T}$ is the \emph{hat
matrix} in any linear regression over $\mathbf{X}$. Then, calling $\hat{\beta
}_{0}=(\mathbf{X}^{T}\mathbf{X})^{-1}\mathbf{X}^{T}\mathbf{y}$, $\hat
{\mathbf{\alpha}}=(\mathbf{X}^{T}\mathbf{X})^{-1}\mathbf{X}^{T}\mathbf{z,}$ we
have
\[
{\binom{\hat{\beta}_{x}}{\hat{\beta}_{z}}}=\left(
\begin{array}
[c]{c}%
\hat{\beta}_{0}+\frac{1}{k}\hat{\mathbf{\alpha}}\hat{\mathbf{z}}_{x}%
^{T}\mathbf{y}-\frac{1}{k}\hat{\mathbf{\alpha}}\mathbf{z}^{T}\mathbf{y}\\
\frac{1}{k}(\mathbf{z-\hat{\mathbf{z}}_{x})}^{T}\mathbf{y}%
\end{array}
\right)
\]
and finally
\[
\hat{\beta}_{x}=\hat{\beta}_{0} -\hat{\mathbf{\alpha}}\hat{\beta}_{z}.
\]
Therefore, the following updating formula is derived:
\[
\hat{ \mathbf{y} }_{x,z} = \mathbf{X} \hat{\beta}_{x} + \mathbf{z} \hat{\beta
}_{z} = \mathbf{X} \hat{\beta}_{0} - ( \mathbf{X} \hat{\alpha}) \hat{\beta
}_{z} + \mathbf{z} \hat{\beta}_{z} = \hat{ \mathbf{y} }_{x} + ( \mathbf{z}
-\hat{ \mathbf{z} }_{x}) \hat{\beta}_{z}.
\]

%\subsection{Proof of Theorem \ref{th:Rel_Om_Train}}
%\label{Ap:Proof_Rel_Om_Train}

%\subsection{Proof of Theorem \ref{th:Rel_Om}}
%\label{Ap:Proof_Rel_Om} 

\subsection{Proof of Theorem \ref{th:Rel_Gh}}

\label{Ap:Proof_Rel_Gh}

By definition, the \emph{relevance by a ghost variable} of the variable $Z$
is
\[
\mbox{Rel}_{\mbox{\scriptsize Gh}}(Z) = \frac{1}{n_{2}} (\hat{ \mathbf{y}
}_{2.X.z} - \hat{ \mathbf{y} }_{2.X.\hat{z}})^{T} (\hat{ \mathbf{y} }_{2.X.z}
- \hat{ \mathbf{y} }_{2.X.\hat{z}}) =
\]
\[
\frac{1}{n_{2}} \hat{\beta}_{z} ( \mathbf{z} _{2}-\hat{ \mathbf{z} }%
_{2.2})^{T} ( \mathbf{z} _{2}-\hat{ \mathbf{z} }_{2.2}) \hat{\beta}_{z} =
\frac{1}{n_{2}} \hat{\beta}_{z}^{2} k_{\mbox{\scriptsize Gh}} = \frac
{\hat{\sigma}^{2}F_{z}}{n_{1}} \frac{k_{\mbox{\scriptsize Gh}}/n_{2}}{k/n_{1}%
},
\]
where $k_{\mbox{\scriptsize Gh}}=( \mathbf{z} _{2}-\hat{ \mathbf{z} }
_{2.2})^{T} ( \mathbf{z} _{2}-\hat{ \mathbf{z} }_{2.2})$ and $k$ has been
defined in Appendix \ref{Ap:updating_formula}. The proof concludes by
observing that $\hat{\sigma}^{2}_{z.x,n_{2}}=k_{\mbox{\scriptsize Gh}}/n_{2}$
is an estimator of the residual variance in the linear regression model
$Z=X^{T}\alpha+ \varepsilon_{z}$, as they also are $k/(n_{1}-p)$ and
$\hat{\sigma}^{2}_{z.x,n_{1}}=k/n_{1}=((n_{1}-p)/n_{1}) (k/(n_{1}-p))$. The
expression involving the $O_{p}$ notation is derived by standard arguments for
the limit of a quotient.

\subsection{Proof of Theorem \ref{th:relevance_matrix}}

\label{Ap:Proof_relevance_matrix}

\begin{lemma}
\label{lemma_a_b} Let $\mathbf{a} $ and $\mathbf{b} $ be two non-null vectors
of $\mathbb{R} ^{d}$. Let $\mathbb{P} _{ \mathbf{b} }( \mathbf{a} )$ be the
projection vector of $\mathbf{a} $ over $\mathbf{b} $, and let $\alpha(
\mathbf{a} , \mathbf{b} )$ be the angle between $\mathbf{a} $ and $\mathbf{b}
$. Then
\[
\cos\left(  \alpha\left(  \mathbf{a} - \mathbb{P} _{ \mathbf{b} }( \mathbf{a}
), \mathbf{b} - \mathbb{P} _{ \mathbf{a} }( \mathbf{b} )\right)  \right)  = -
\cos(\alpha( \mathbf{a} , \mathbf{b} )).
\]

\end{lemma}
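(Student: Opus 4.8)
The plan is to reduce everything to the two-dimensional subspace spanned by $\mathbf{a}$ and $\mathbf{b}$ and then compute the required cosine by hand. First I would introduce the unit vectors $\mathbf{u}=\mathbf{a}/\|\mathbf{a}\|$ and $\mathbf{v}=\mathbf{b}/\|\mathbf{b}\|$ and set $c=\cos(\alpha(\mathbf{a},\mathbf{b}))=\mathbf{u}^{T}\mathbf{v}$. Since $\mathbb{P}_{\mathbf{b}}(\mathbf{a})=(\mathbf{a}^{T}\mathbf{b}/\mathbf{b}^{T}\mathbf{b})\,\mathbf{b}$ lies on the line through $\mathbf{b}$, both residual vectors $\mathbf{a}-\mathbb{P}_{\mathbf{b}}(\mathbf{a})$ and $\mathbf{b}-\mathbb{P}_{\mathbf{a}}(\mathbf{b})$ remain in $\mathrm{span}\{\mathbf{a},\mathbf{b}\}$, so the problem is genuinely two-dimensional and only the relations $\mathbf{u}^{T}\mathbf{u}=\mathbf{v}^{T}\mathbf{v}=1$ and $\mathbf{u}^{T}\mathbf{v}=c$ will be needed.

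The next step is to get closed forms for the two residuals. A short computation with the projection formula gives
\[
\mathbf{a}-\mathbb{P}_{\mathbf{b}}(\mathbf{a})=\|\mathbf{a}\|\,(\mathbf{u}-c\mathbf{v}),\qquad
\mathbf{b}-\mathbb{P}_{\mathbf{a}}(\mathbf{b})=\|\mathbf{b}\|\,(\mathbf{v}-c\mathbf{u}).
\]
Because the positive scalars $\|\mathbf{a}\|$ and $\|\mathbf{b}\|$ do not affect an angle, it is enough to compute the cosine of the angle between $\mathbf{u}-c\mathbf{v}$ and $\mathbf{v}-c\mathbf{u}$. Expanding with the three inner-product relations above yields $(\mathbf{u}-c\mathbf{v})^{T}(\mathbf{v}-c\mathbf{u})=-c(1-c^{2})$, while each squared norm is $\|\mathbf{u}-c\mathbf{v}\|^{2}=\|\mathbf{v}-c\mathbf{u}\|^{2}=1-c^{2}$. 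Dividing the inner product by the product of the norms, the common factor $1-c^{2}$ cancels and leaves $-c=-\cos(\alpha(\mathbf{a},\mathbf{b}))$, which is exactly the claim.

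The only point requiring care — and the main (if minor) obstacle — is the collinear case $c^{2}=1$: then $1-c^{2}=0$, both residual vectors vanish, and the left-hand angle is undefined, so the statement implicitly assumes $\mathbf{a}$ and $\mathbf{b}$ are linearly independent. In the regression application this is precisely the condition that $\mathbf{x}_{2.j}$ does not lie in the column space of $\mathbf{X}_{2.[j]}$, i.e.\ that there is no exact collinearity; this is exactly when the ghost-variable residuals $\mathbf{x}_{2.j}-\hat{\mathbf{x}}_{2.j}$ are nonzero and the relevance matrix $\mathbf{V}$ is well defined, so I would note this assumption explicitly and then the identity of Theorem \ref{th:relevance_matrix} (the sign relating $g_{jk}$ to the partial correlation) follows by applying the lemma to the two residual directions.
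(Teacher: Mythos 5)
Your proof is correct and follows essentially the same route as the paper's: both expand the inner product of the two residual vectors and their norms using the projection formula and divide, your normalization to unit vectors being only a cosmetic simplification of the same algebra (the paper obtains the residual norms via Pythagoras, you by direct expansion of $\|\mathbf{u}-c\mathbf{v}\|^{2}$). Your explicit remark that the collinear case $c^{2}=1$ must be excluded is a worthwhile observation that the paper's proof also tacitly assumes when it divides by $\sin^{2}(\alpha(\mathbf{a},\mathbf{b}))$.
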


\begin{proof}
Given that $\cos( \alpha(\ba,\bb))=\ba^T\bb/(\|\ba\|\,\|\bb\|)$
and
$\PP_{\bb}(\ba)
= \cos( \alpha(\ba,\bb)) \|\ba\| (\bb/\|\bb\|)
= (\ba^T \bb) \bb / \|\bb\|^2
$,
it follows that
\[
\ba^T \PP_{\ba}(\bb)= \PP_{\bb}(\ba)^T \bb=
\ba^T\bb,
\,\,
\PP_{\bb}(\ba)^T \PP_{\ba}(\bb)=
\cos^2(\alpha(\ba,\bb))\, \ba^T\bb
\]
and
\[
\|\PP_{\bb}(\ba)\|^2=\PP_{\bb}(\ba)^T \PP_{\bb}(\ba)=
\cos^2(\alpha(\ba,\bb))\|\ba\|^2.
\]
By the Pythagoras Theorem,
\[
\| \ba - \PP_{\bb}(\ba) \|^2=\|\ba\|^2 -  \|\PP_{\bb}(\ba)\|^2
=\sin^2(\alpha(\ba,\bb)) \|\ba\|^2.
\]
Finally,
\[
\cos\left(\alpha\left(\ba-\PP_{\bb}(\ba), \bb-\PP_{\ba}(\bb)\right)\right) =
\frac{(\ba - \PP_{\bb}(\ba))^T(\bb - \PP_{\ba}(\bb))}
{\| \ba - \PP_{\bb}(\ba) \|\| \bb - \PP_{\ba}(\bb) \|}=
\]
\[
\frac{\ba^T\bb - \PP_{\bb}(\ba)^T\bb -\ba^T \PP_{\ba}(\bb) +
\PP_{\bb}(\ba)^T \PP_{\ba}(\bb)}
{\sin^2(\alpha(\ba,\bb)) \|\ba\|\|\bb\|}
=
\]
\[
\frac{-(1-\cos^2(\alpha(\ba,\bb)))\ba^T\bb}
{\sin^2(\alpha(\ba,\bb)) \|\ba\|\|\bb\|}
=
- \cos(\alpha(\ba,\bb)).
\]
\end{proof}

\subsubsection*{Proof of Theorem \ref{th:relevance_matrix}}

We start proving that the matrix
\[
\mathbf{G}=\frac{1}{n_{2}}(\mathbf{X}_{2}-\hat{\mathbf{X}}_{2}
)^{\mbox{\scriptsize T}}(\mathbf{X}_{2}-\hat{\mathbf{X}}_{2})
\]
has generic non-diagonal element $g_{jk}=\hat{\rho}_{jk.R}\hat{\sigma}_{[j]}
\hat{\sigma}_{[k]}$ for $j\neq k$, where $\hat{\rho}_{jk.R}$ is the partial
correlation coefficient between variables $j$ and $k$ when controlling by the
rest of variables, and $\hat{\sigma}_{[j]}^{2}$ is the $j$-th element in the
diagonal of $G$:
\[
g_{jj}=\hat{\sigma}_{[j]}^{2}=\frac{1}{n_{2}} (\mathbf{x}_{2.j}-\hat
{\mathbf{x}}_{2.j})^{\mbox{\scriptsize T}}(\mathbf{x}_{2.j}-\hat{\mathbf{x}%
}_{2.j}).
\]
It is equivalent to proof that
\[
\hat{\rho}_{jk.R}=-\frac{g_{jk}}{\sqrt{g_{jj}g_{kk}}}= - \frac{(\mathbf{x}%
_{2.j}-\hat{\mathbf{x}}_{2.j})^{\mbox{\scriptsize T}} (\mathbf{x}_{2.k}%
-\hat{\mathbf{x}}_{2.k})} {\sqrt{(\mathbf{x}_{2.j}-\hat{\mathbf{x}}%
_{2.j})^{\mbox{\scriptsize T}} (\mathbf{x}_{2.j}-\hat{\mathbf{x}}_{2.j})}
\sqrt{(\mathbf{x}_{2.k}-\hat{\mathbf{x}}_{2.k})^{\mbox{\scriptsize T}}
(\mathbf{x}_{2.k}-\hat{\mathbf{x}}_{2.k})}},
\]
that is, we have to prove that the cosinus of the angle between the vector of
residuals $\mathbf{x}_{2.j}-\hat{\mathbf{x}}_{2.j}$ and $\mathbf{x}_{2.k}%
-\hat{\mathbf{x}}_{2.k}$ is equal to minus the cosinus of the angle between
the vector of residuals $\mathbf{a} =\mathbf{x}_{2.j}-\hat{\mathbf{x}}%
_{2.j.R}$ and $\mathbf{b} =\mathbf{x}_{2.k}-\hat{\mathbf{x}}_{2.k.R}$,
obtained when regressing $\mathbf{x} _{2.j}$ and $\mathbf{x} _{2.k}$
respectively over $\mathbf{R} =\mathbf{X}_{2.[jk]}$, the matrix with columns
$\mathbf{x}_{2.1},\ldots,\mathbf{x}_{2.p}$, except $\mathbf{x}_{2.j}$ and
$\mathbf{x}_{2.k}$.

We use now the notation $\mathbb{P} _{ \mathbf{U} }( \mathbf{x} )$ to denote
the projection of the vector $\mathbf{x} $ over the linear space $\mathbf{U}
$, and $\{ \mathbf{R} , \mathbf{x} \}$ for the subspace generated by the
columns of the matrix $\mathbf{R} $ and the vector $\mathbf{x} $. Observe
that
\[
\hat{ \mathbf{x} }_{2.j}= \mathbb{P} _{\{ \mathbf{R} , \mathbf{x} _{2.k}\}}(
\mathbf{x} _{2.j})= \mathbb{P} _{\{ \mathbf{R} , \mathbf{b} \}}( \mathbf{x}
_{2.j})= \mathbb{P} _{\{ \mathbf{R} , \mathbf{b} \}}(\hat{\mathbf{x}}_{2.j.R}+
\mathbf{a} )=
\]
\[
\mathbb{P} _{\{ \mathbf{R} , \mathbf{b} \}}(\hat{\mathbf{x}}_{2.j.R})+
\mathbb{P} _{\{ \mathbf{R} , \mathbf{b} \}}( \mathbf{a} )= \hat{\mathbf{x}%
}_{2.j.R} + \mathbb{P} _{ \mathbf{b} }( \mathbf{a} ).
\]
Therefore
\[
\mathbf{x}_{2.j}-\hat{ \mathbf{x} }_{2.j}= \mathbf{x}_{2.j}-\hat{\mathbf{x}%
}_{2.j.R} - \mathbb{P} _{ \mathbf{b} }( \mathbf{a} ) = \mathbf{a} - \mathbb{P}
_{ \mathbf{b} }( \mathbf{a} ).
\]
Analogously, $\mathbf{x}_{2.k}-\hat{ \mathbf{x} }_{2.k}= \mathbf{b} -
\mathbb{P} _{ \mathbf{a} }( \mathbf{b} )$. A direct application of Lemma
\ref{lemma_a_b} finishes the proof that $\hat{\rho}_{jk.R}=-g_{jk}%
/\sqrt{g_{jj}g_{kk}}$. The other statements in the Theorem follow directly
from the relation $\mathbf{V} =\mbox{diag}(\hat{\beta}) \mathbf{G}
\mbox{diag}(\hat{\beta})$.

\newpage

\section*{Online Supplement}

\subsubsection*{Supplement A: Results on relevance by omission}

\begin{proposition}
\label{prop:Rel_Om_Train} Let
\[
\mbox{\rm Rel}_{\mbox{\scriptsize\rm Om}}^{\mbox{\scriptsize\rm Train}}%
(Z)=\frac{1}{n_{1}}(\hat{\mathbf{y}}_{1.X.z}-\hat{\mathbf{y}}_{1.X})^{T}%
(\hat{\mathbf{y}}_{1.X.z}-\hat{\mathbf{y}}_{1.X})
\]
be the relevance by omission of the variable $Z$, evaluated in the training
sample. It happens that
\[
\frac{n_{1}}{\hat{\sigma}^{2}}\mbox{Rel}_{\mbox{\scriptsize Om}}%
^{\mbox{\scriptsize Train}}(Z)=F_{z},
\]
where $\hat{\sigma}^{2}=(\mathbf{y}_{1}-\hat{\mathbf{y}}_{1.X.z}%
)^{T}(\mathbf{y}_{1}-\hat{\mathbf{y}}_{1.X.z})/(n_{1}-p-1)$. Moreover,
\[
\mbox{Rel}_{\mbox{\scriptsize Om}}^{\mbox{\scriptsize Train}}(Z)=\hat{\beta
}_{z}^{2}\hat{\sigma}_{z.x,n_{1}}^{2},
\]
where $\hat{\sigma}_{z.x,n_{1}}^{2}$ is a consistent estimator of
$\sigma_{z.x}^{2}$, the residual variance in the model $Z=X^{T}\alpha
+\varepsilon_{z}$, computed from the training sample.
\end{proposition}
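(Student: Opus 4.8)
The plan is to reduce both identities to the updating formula for the OLS fit derived in Appendix \ref{Ap:updating_formula}, exactly as in the proof of Theorem \ref{th:Rel_Gh}, the only simplification being that here everything is computed on the single training sample. Specialized to the training data, that formula reads $\hat{\mathbf{y}}_{1.X.z} = \hat{\mathbf{y}}_{1.X} + (\mathbf{z}_1 - \hat{\mathbf{z}}_{1.x})\hat{\beta}_z$, where $\hat{\mathbf{z}}_{1.x} = \mathbf{H}_{1.x}\mathbf{z}_1$ is the projection of $\mathbf{z}_1$ onto the column space of $\mathbf{X}_1$. First I would read off the consequence $\hat{\mathbf{y}}_{1.X.z} - \hat{\mathbf{y}}_{1.X} = (\mathbf{z}_1 - \hat{\mathbf{z}}_{1.x})\hat{\beta}_z$, so that the difference of prediction vectors is a scalar multiple of the residual vector obtained when regressing $Z$ on $X$.

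Substituting this into the definition of $\mbox{Rel}_{\mbox{Om}}^{\mbox{Train}}(Z)$ collapses it to $(1/n_1)\hat{\beta}_z^2\, k$, where $k = (\mathbf{z}_1 - \hat{\mathbf{z}}_{1.x})^T(\mathbf{z}_1 - \hat{\mathbf{z}}_{1.x})$ is precisely the scalar introduced in Appendix \ref{Ap:updating_formula}. Since $k$ is the residual sum of squares of the regression $Z = X^T\alpha + \varepsilon_z$ fitted on the training sample, $k/n_1$ is the ($n_1$-denominator) estimator $\hat{\sigma}_{z.x,n_1}^2$; this yields the second identity $\mbox{Rel}_{\mbox{Om}}^{\mbox{Train}}(Z) = \hat{\beta}_z^2\,\hat{\sigma}_{z.x,n_1}^2$ at once, with consistency of $\hat{\sigma}_{z.x,n_1}^2$ for $\sigma_{z.x}^2$ following from the law of large numbers.

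For the first identity I would connect $k$ to the standard error of $\hat{\beta}_z$. The block-inverse formula in Appendix \ref{Ap:updating_formula} identifies the bottom-right entry of the inverse of the full design cross-product matrix as $1/k$, so the standard OLS variance formula yields $\mbox{se}(\hat{\beta}_z)^2 = \hat{\sigma}^2/k$ and hence $F_z = t_z^2 = \hat{\beta}_z^2 k/\hat{\sigma}^2$. Multiplying the expression $(1/n_1)\hat{\beta}_z^2 k$ for the relevance by $n_1/\hat{\sigma}^2$ then cancels $n_1$ and reproduces $\hat{\beta}_z^2 k/\hat{\sigma}^2 = F_z$, as required.

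I do not anticipate a genuine obstacle: each step is a direct algebraic consequence of the updating and block-inverse identities already in hand. The only place demanding care is the bookkeeping of the two residual-variance normalizations — the $n_1$ denominator hidden in $\hat{\sigma}_{z.x,n_1}^2 = k/n_1$ against the $(n_1-p-1)$ denominator inside $\hat{\sigma}^2$ — together with the correct identification of the $(p+1,p+1)$ entry of the block inverse as $1/k$, so that the standard error is formed against the right pivot.
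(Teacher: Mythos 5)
Your proof is correct and follows essentially the same route as the paper's: both rest on the block-inverse/updating identity $\hat{\mathbf{y}}_{1.X.z}-\hat{\mathbf{y}}_{1.X}=(\mathbf{z}_1-\hat{\mathbf{z}}_1)\hat{\beta}_z$, the identification of the pivot $k$ as the residual sum of squares of the regression of $Z$ on $X$ (so $\hat{\sigma}^2_{z.x,n_1}=k/n_1$), and the relation $F_z=\hat{\beta}_z^2k/\hat{\sigma}^2$ coming from $\mbox{Var}(\hat{\beta}_z)=\sigma^2/k$. The only difference is the direction of presentation (you go from the relevance to $F_z$, the paper the reverse), which is immaterial.
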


\begin{proof}
We are dealing with the estimation of models $m(x,z)=x^{T}\beta_{x}+z\beta
_{z}$ and $m_{p}(x)=x^{T}\beta_{0}$ from the training set $(\mathbf{X}%
_{1},\mathbf{z}_{1},\mathbf{y}_{1})$. The ordinary least squares (OLS)
coefficient estimators are $\hat{\beta}_{0}=(\mathbf{X}_{1}^{T}\mathbf{X}%
_{1})^{-1}\mathbf{X}_{1}^{T}\mathbf{y}_{1}$ and
\[
{\binom{\hat{\beta}_{x}}{\hat{\beta}_{z}}}=\left(
\begin{array}
[c]{cc}%
\mathbf{X}_{1}^{T}\mathbf{X}_{1} & \mathbf{X}_{1}^{T}\mathbf{z}_{1}\\
\mathbf{z}_{1}^{T}\mathbf{X}_{1} & \mathbf{z}_{1}^{T}\mathbf{z}_{1}%
\end{array}
\right)  ^{-1}{\binom{\mathbf{X}_{1}^{T}}{\mathbf{z}_{1}^{T}}}\mathbf{y}_{1}.
\]
The predicted values for the response are, respectively, $\hat{\mathbf{y}%
}_{1.X.z}=\mathbf{X}_{1}\hat{\beta}_{x}+\mathbf{z}_{1}\hat{\beta}_{z}$ and
$\hat{\mathbf{y}}_{1.X}=\mathbf{X}_{1}\hat{\beta}_{0}$. Using the expression
of the inverse of a partitioned matrix (see Appendix A.1%\ref{Ap:updating_formula}%
), it is easy to obtain the well known result%
\begin{equation}
\hat{\beta}_{x}=\hat{\beta}_{0}-\hat{\alpha}_{1}\hat{\beta}_{z}%
\label{betayalfa}%
\end{equation}
where $\hat{\alpha}_{1}=(\mathbf{X}_{1}^{T}\mathbf{X}_{1})^{-1}\mathbf{X}%
_{1}^{T}\mathbf{z}_{1}$ is the vector of regression coefficients in the
regression of the omitted variable on the other explanatory variables in the
training set, and
\begin{equation}
\hat{\beta}_{z}=\frac{1}{k}(\mathbf{z}_{1}-\hat{\mathbf{z}}_{1})^{T}%
\mathbf{y}_{1}\label{betaz}%
\end{equation}
where $\hat{\mathbf{z}}_{1}=\mathbf{X}_{1}\hat{\alpha}_{1}$ and
%$k=\mathbf{z}_{1}^{T}\mathbf{z}_{1}-\hat{\mathbf{z}}_{1}^{T}\hat{\mathbf{z}}_{1}$.
$k=(\mathbf{z}_{1}-\hat{\mathbf{z}}_{1})^{T}(\mathbf{z}_{1}-\hat{\mathbf{z}}_{1})$. 
These results (see, e.g., \citeNP{seber2003linear}) show that the
multiple regression coefficient of each variable, $\hat{\beta}_{z}$, is the
slope in the simple regression of $\mathbf{y}$ on $\mathbf{z}_{1}%
-\hat{\mathbf{z}}_{1}$, the part of $\mathbf{z}_{1}$ that is uncorrelated to
the rest of explanatory variables. Also, $\mbox{Var}(\hat{\beta}_{z}%
)=\sigma^{2}/k$, the standard $t$-test statistic for the null hypothesis
$H_{0}:\beta_{z}=0$ is $t_{z}=\hat{\beta}_{z}\sqrt{k}/\hat{\sigma}$, where
$\hat{\sigma}^{2}= (\mathbf{y}_{1}-\hat{\mathbf{y}}_{1.X.z})^{T}%
(\mathbf{y}_{1}-\hat{\mathbf{y}}_{1.X.z})/(n_{1}-p-1)$, and the standard
$F$-test statistic for the same null hypothesis is
\[
F_{z}=t_{z}^{2}= \frac{\hat{\beta}_{z}^{2} k }{\hat{\sigma}^{2}}= \frac
{1}{\hat{\sigma}^{2}}\hat{\beta}_{z}(\mathbf{z}_{1}-\hat{\mathbf{z}}_{1}%
)^{T}(\mathbf{z}_{1}-\hat{\mathbf{z}}_{1})\hat{\beta}_{z}=
\]
\[
\frac{1}{\hat{\sigma}^{2}}(\hat{\mathbf{y}}_{1.X.z}-\hat{\mathbf{y}}%
_{1.X})^{T}(\hat{\mathbf{y}}_{1.X.z}-\hat{\mathbf{y}}_{1.X}) =\frac{n_{1}%
}{\hat{\sigma}^{2}}\mbox{Rel}_{\mbox{\scriptsize Om}}%
^{\mbox{\scriptsize Train}}(Z).
\]
The proof concludes when defining $\hat{\sigma}^{2}_{z.x,n_{1}}=
(\mathbf{z}_{1}-\hat{\mathbf{z}}_{1})^{T} (\mathbf{z}_{1}-\hat{\mathbf{z}}%
_{1})/n_{1}$.
\end{proof}

\begin{proposition}
\label{prop:Rel_Om} Assume that the regression function of $Y$ over $(X,Z)$ is
linear and that it is estimated by OLS. Then
\[
\frac{n_{1}}{\hat{\sigma}^{2}}\mbox{\rm Rel}_{\mbox{\scriptsize\rm Om}}%
(Z)=F_{z} \, \frac{\hat{\sigma}^{2}_{z.x,n_{1},n_{2}}}{\hat{\sigma}%
^{2}_{z.x,n_{1}}}= F_{z} \left(  1 + O_{p}\left(  \min\{n_{1},n_{2}%
\}^{-1/2}\right)  \right)  ,
\]
and
\[
\mbox{Rel}_{\mbox{\scriptsize Om}}(Z)= \hat{\beta}_{z}^{2} \hat{\sigma}%
^{2}_{z.x,n_{1},n_{2}},
\]
where $\hat{\sigma}^{2}_{z.x,n_{1},n_{2}}$ and $\hat{\sigma}^{2}_{z.x,n_{1}}$
are consistent estimators of the same parameter $\sigma^{2}_{z.x}$ (the
residual variance in the linear regression model $Z=X^{T}\alpha+\varepsilon
_{z}$), the first one depending on both, the training sample and the test
sample, and the second one (also appearing in Proposition \ref{prop:Rel_Om_Train})
only on the training sample.
\end{proposition}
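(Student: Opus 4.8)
The plan is to reduce $\mbox{Rel}_{\mbox{\scriptsize Om}}(Z)$ to the squared length of a residual vector, in close parallel to the argument for Theorem \ref{th:Rel_Gh}; the only structural difference is that here the coefficients used to reconstruct $\mathbf{z}_2$ come from the training sample rather than the test sample. First I would compute $\hat{\mathbf{y}}_{2.X.z}-\hat{\mathbf{y}}_{2.X}$ explicitly. Using the updating identity $\hat{\beta}_x=\hat{\beta}_0-\hat{\alpha}_1\hat{\beta}_z$ from Appendix \ref{Ap:updating_formula}, with $\hat{\alpha}_1=(\mathbf{X}_1^T\mathbf{X}_1)^{-1}\mathbf{X}_1^T\mathbf{z}_1$, the two $\mathbf{X}_2\hat{\beta}_0$ contributions cancel and one is left with $\hat{\mathbf{y}}_{2.X.z}-\hat{\mathbf{y}}_{2.X}=(\mathbf{z}_2-\mathbf{X}_2\hat{\alpha}_1)\hat{\beta}_z$. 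Hence $\mbox{Rel}_{\mbox{\scriptsize Om}}(Z)=\frac{1}{n_2}\hat{\beta}_z^2(\mathbf{z}_2-\mathbf{X}_2\hat{\alpha}_1)^T(\mathbf{z}_2-\mathbf{X}_2\hat{\alpha}_1)$. Defining $\hat{\sigma}^2_{z.x,n_1,n_2}=\frac{1}{n_2}(\mathbf{z}_2-\mathbf{X}_2\hat{\alpha}_1)^T(\mathbf{z}_2-\mathbf{X}_2\hat{\alpha}_1)$, the residual variance of the model $Z=X^T\alpha+\varepsilon_z$ fitted on the training sample but with residuals evaluated on the test sample, yields the second displayed identity $\mbox{Rel}_{\mbox{\scriptsize Om}}(Z)=\hat{\beta}_z^2\hat{\sigma}^2_{z.x,n_1,n_2}$ immediately.

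Next I would connect this to $F_z$. From the proof of Proposition \ref{prop:Rel_Om_Train} we have $F_z=\hat{\beta}_z^2 k/\hat{\sigma}^2$ with $k=(\mathbf{z}_1-\hat{\mathbf{z}}_1)^T(\mathbf{z}_1-\hat{\mathbf{z}}_1)=n_1\hat{\sigma}^2_{z.x,n_1}$, so that $\hat{\beta}_z^2=F_z\hat{\sigma}^2/(n_1\hat{\sigma}^2_{z.x,n_1})$. Substituting into $\mbox{Rel}_{\mbox{\scriptsize Om}}(Z)=\hat{\beta}_z^2\hat{\sigma}^2_{z.x,n_1,n_2}$ and multiplying by $n_1/\hat{\sigma}^2$ gives $\frac{n_1}{\hat{\sigma}^2}\mbox{Rel}_{\mbox{\scriptsize Om}}(Z)=F_z\,\hat{\sigma}^2_{z.x,n_1,n_2}/\hat{\sigma}^2_{z.x,n_1}$, which is the first equality.

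For the $O_p$ statement I would write $\mathbf{z}_2=\mathbf{X}_2\alpha+\boldsymbol{\varepsilon}_{z,2}$ and decompose $\mathbf{z}_2-\mathbf{X}_2\hat{\alpha}_1=\boldsymbol{\varepsilon}_{z,2}+\mathbf{X}_2(\alpha-\hat{\alpha}_1)$. Expanding the squared norm produces three terms: the leading term $\frac{1}{n_2}\|\boldsymbol{\varepsilon}_{z,2}\|^2=\sigma^2_{z.x}+O_p(n_2^{-1/2})$; a quadratic term $(\alpha-\hat{\alpha}_1)^T(\mathbf{X}_2^T\mathbf{X}_2/n_2)(\alpha-\hat{\alpha}_1)=O_p(n_1^{-1})$, since $\hat{\alpha}_1-\alpha=O_p(n_1^{-1/2})$ and $\mathbf{X}_2^T\mathbf{X}_2/n_2$ is bounded in probability; and a cross term of order $O_p(n_1^{-1/2}n_2^{-1/2})$, because $\hat{\alpha}_1$ is computed on the training sample and is therefore independent of the test sample, so $\frac{1}{n_2}\boldsymbol{\varepsilon}_{z,2}^T\mathbf{X}_2=O_p(n_2^{-1/2})$. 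Thus $\hat{\sigma}^2_{z.x,n_1,n_2}=\sigma^2_{z.x}+O_p(n_2^{-1/2})$, and analogously $\hat{\sigma}^2_{z.x,n_1}=\sigma^2_{z.x}+O_p(n_1^{-1/2})$; taking the ratio and using $\sigma^2_{z.x}>0$ gives $\hat{\sigma}^2_{z.x,n_1,n_2}/\hat{\sigma}^2_{z.x,n_1}=1+O_p(\min\{n_1,n_2\}^{-1/2})$, since the sum of the two error rates is governed by the smaller sample size.

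The bulk of the derivation is routine linear-model algebra already carried out in Appendix \ref{Ap:Proof_Rel_Gh}. The one point requiring care is the stochastic-order bookkeeping in the final step: exploiting the independence of training and test samples to show the cross term is negligible, and verifying that combining the $O_p(n_1^{-1/2})$ and $O_p(n_2^{-1/2})$ contributions yields exactly the stated $\min\{n_1,n_2\}^{-1/2}$ rate.
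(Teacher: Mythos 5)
Your proposal is correct and follows essentially the same route as the paper's proof: the same use of the updating identity $\hat{\beta}_x=\hat{\beta}_0-\hat{\alpha}_1\hat{\beta}_z$ to reduce $\hat{\mathbf{y}}_{2.X.z}-\hat{\mathbf{y}}_{2.X}$ to $(\mathbf{z}_2-\mathbf{X}_2\hat{\alpha}_1)\hat{\beta}_z$, the same definition of $\hat{\sigma}^2_{z.x,n_1,n_2}$ as the normalized squared norm of that residual vector, and the same substitution of $F_z=\hat{\beta}_z^2 k/\hat{\sigma}^2$. The only difference is that you explicitly carry out the stochastic-order bookkeeping for the ratio $\hat{\sigma}^2_{z.x,n_1,n_2}/\hat{\sigma}^2_{z.x,n_1}$, which the paper dispatches as ``standard arguments for the limit of a quotient''; your decomposition and the resulting $\min\{n_1,n_2\}^{-1/2}$ rate are sound.
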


\begin{proof}
The vectors of predicted values in the test sample are
$\hat{\mathbf{y}}_{2.X.z}=\mathbf{X}_{2}\hat{\beta}_{x}+\mathbf{z}_{2}
\hat{\beta}_{z}$ and, using equation (\ref{betayalfa}) in 
the proof of Proposition \ref{prop:Rel_Om_Train},
\[
\hat{\mathbf{y}}_{2.X}=\mathbf{X}_{2}\hat{\beta}_{0}=\mathbf{X}_{2}\left(
\hat{\beta}_{x}+\hat{\alpha}_{1}\hat{\beta}_{z}\right)  =\mathbf{X}_{2}%
\hat{\beta}_{x}+\hat{\mathbf{z}}_{2.1}\hat{\beta}_{z},
\]
where $\hat{\mathbf{z}}_{2.1}=\mathbf{X}_{2}\hat{\alpha}_{1}$ is the
prediction of $\mathbf{z}_{2}$ using the linear model fitted in the training
sample to predict $Z$ from $X$. Therefore, using the same notation as in
Proposition \ref{prop:Rel_Om_Train}, the \emph{relevance by omission} of the
variable $Z$ is
\[
\mbox{Rel}_{\mbox{\scriptsize Om}}(Z)=\frac{1}{n_{2}}(\hat{\mathbf{y}}%
_{2.X.z}-\hat{\mathbf{y}}_{2.X})^{T}(\hat{\mathbf{y}}_{2.X.z}-\hat{\mathbf{y}%
}_{2.X})=
\]
\[
\frac{1}{n_{2}}\hat{\beta}_{z}(\mathbf{z}_{2}-\hat{\mathbf{z}}_{2.1}%
)^{T}(\mathbf{z}_{2}-\hat{\mathbf{z}}_{2.1})\hat{\beta}_{z}= \frac{1}{n_{2}%
}\hat{\beta}_{z}^{2}k_{\mbox{\scriptsize Om}}=\frac{\hat{\sigma}^{2}F_{z}%
}{n_{1}}\frac{k_{\mbox{\scriptsize Om}}/n_{2}}{k/n_{1}},
\]
where $k_{\mbox{\scriptsize Om}}=(\mathbf{z}_{2}-\hat{\mathbf{z}}_{2.1}%
)^{T}(\mathbf{z}_{2}-\hat{\mathbf{z}}_{2.1})$ and $k$ has been defined in the
Appendix A.1 %\ref{Ap:updating_formula} 
of the paper. 
Observe that both,
$k_{\mbox{\scriptsize Om}}/n_{2}$ and $k/(n_{1}-p)$, are consistent estimators
of the residual variance in the linear regression model $Z=X^{T}%
\alpha+\varepsilon_{z}$. The proof concludes when defining $\hat{\sigma}%
^{2}_{z.x,n_{1},n_{2}}=k_{\mbox{\scriptsize Om}}/n_{2}$ and using $\hat
{\sigma}^{2}_{z.x,n_{1}}$ defined in Proposition \ref{prop:Rel_Om_Train}. The
expression involving the $O_{p}$ notation is derived by standard arguments for
the limit of a quotient.
\end{proof}

\newpage 

\subsubsection*{Supplement B: Additive model for the real data example}
The standard output of the additive model is shown in Table
\ref{tb:out.am.price}. Five variables are not significant at level size 0.001
(one of them, \texttt{floorLift}, enters in the model in a non-parametric
way). 
%The 7 most significant variables coincide with those mentioned as such for the linear model. 
The adjusted $R^{2}$ is $0.781$ (slightly larger than in
the linear model, where it was $0.76$).

\begin{table}[ptb]
\caption{Rent housing prices: Standard output of the additive model.}%
\label{tb:out.am.price}
\begin{verbatim}
## log(price) ~ Barcelona + s(categ.distr, k = 3) + type.chalet + 
##     type.duplex + type.penthouse + type.studio + s(floor) + hasLift + 
##     s(floorLift, k = 6) + s(log.size) + exterior + s(rooms) + 
##     s(bathrooms, k = 6) + hasParkingSpace + ParkingInPrice + 
##     s(log_activation)
## 
## Parametric coefficients:
##                  Estimate Std. Error t value Pr(>|t|)    
## (Intercept)      7.276755   0.012134 599.690  < 2e-16 ***
## Barcelona        0.099034   0.005232  18.930  < 2e-16 ***
## type.chalet     -0.094697   0.020976  -4.514 6.41e-06 ***
## type.duplex     -0.024456   0.014702  -1.663  0.09626 .  
## type.penthouse   0.058281   0.010054   5.797 6.94e-09 ***
## type.studio      0.034965   0.033047   1.058  0.29006    
## hasLift          0.054562   0.012260   4.450 8.65e-06 ***
## exterior        -0.027271   0.006622  -4.118 3.85e-05 ***
## hasParkingSpace -0.004070   0.012444  -0.327  0.74363    
## ParkingInPrice  -0.038634   0.013312  -2.902  0.00371 ** 
## ---
## Signif. codes:  0 '***' 0.001 '**' 0.01 '*' 0.05 '.' 0.1 ' ' 1
## 
## Approximate significance of smooth terms:
##                     edf Ref.df       F p-value    
## s(categ.distr)    1.994  2.000 698.603  <2e-16 ***
## s(floor)          7.375  8.163  42.133  <2e-16 ***
## s(floorLift)      1.000  1.000   4.122  0.0423 *  
## s(log.size)       8.498  8.898 568.528  <2e-16 ***
## s(rooms)          7.305  7.957  43.673  <2e-16 ***
## s(bathrooms)      4.602  4.895 118.760  <2e-16 ***
## s(log_activation) 3.500  4.350 128.396  <2e-16 ***
## ---
## Signif. codes:  0 '***' 0.001 '**' 0.01 '*' 0.05 '.' 0.1 ' ' 1
## 
## R-sq.(adj) =  0.781   Deviance explained = 78.1%
## GCV = 0.064253  Scale est. = 0.064006  n = 11536
\end{verbatim}
\end{table}

Let us examine the relevance by ghost variables results (Figure
\ref{fig:VarRlevIdealista_gam_Gh}). Now, in the additive model, there is no
theoretical support for a direct relation between the $t$ and $F$-values shown
in Table \ref{tb:out.am.price}, and the relevance values plotted in the first
plot of Figure \ref{fig:VarRlevIdealista_gam_Gh}. In fact we can see that this
direct relation does not happen in this case (for instance, \texttt{log.size}
appears in Figure \ref{fig:VarRlevIdealista_gam_Gh} as much more relevant than
\texttt{categ.distr}, but the $F$-value of the term \texttt{s(categ.distr)} is
larger than that of \texttt{s(log.size)} in Table \ref{tb:out.am.price}). The
7 most relevant variables coincide with the 7 most statistically significant
in the additive model, that were also detected as the most relevant in the
linear model.

From the study of the relevance matrix $\mathbf{V} $ we observe that its first 7 eigenvectors 
are very similar to the corresponding ones in the linear model.
The first eigenvector is mainly associated with (\texttt{log.size}, while the second one is mainly related with \texttt{categ.distr}. Similarly, the sixth eigenvector is related with \texttt{floor}. 
On the other hand,
eigenvectors 3, 4, 5 and 7 seem to be jointly associated with the other 4
relevant variables, with the relationship between \texttt{bathrooms} and
\texttt{rooms} appearing stronger than that between \texttt{log\_activation}
and \texttt{Barcelona}.

\begin{figure}[p]
\begin{center}
\vspace*{-1cm} \hspace*{-1.5cm}
\includegraphics[scale=.45]{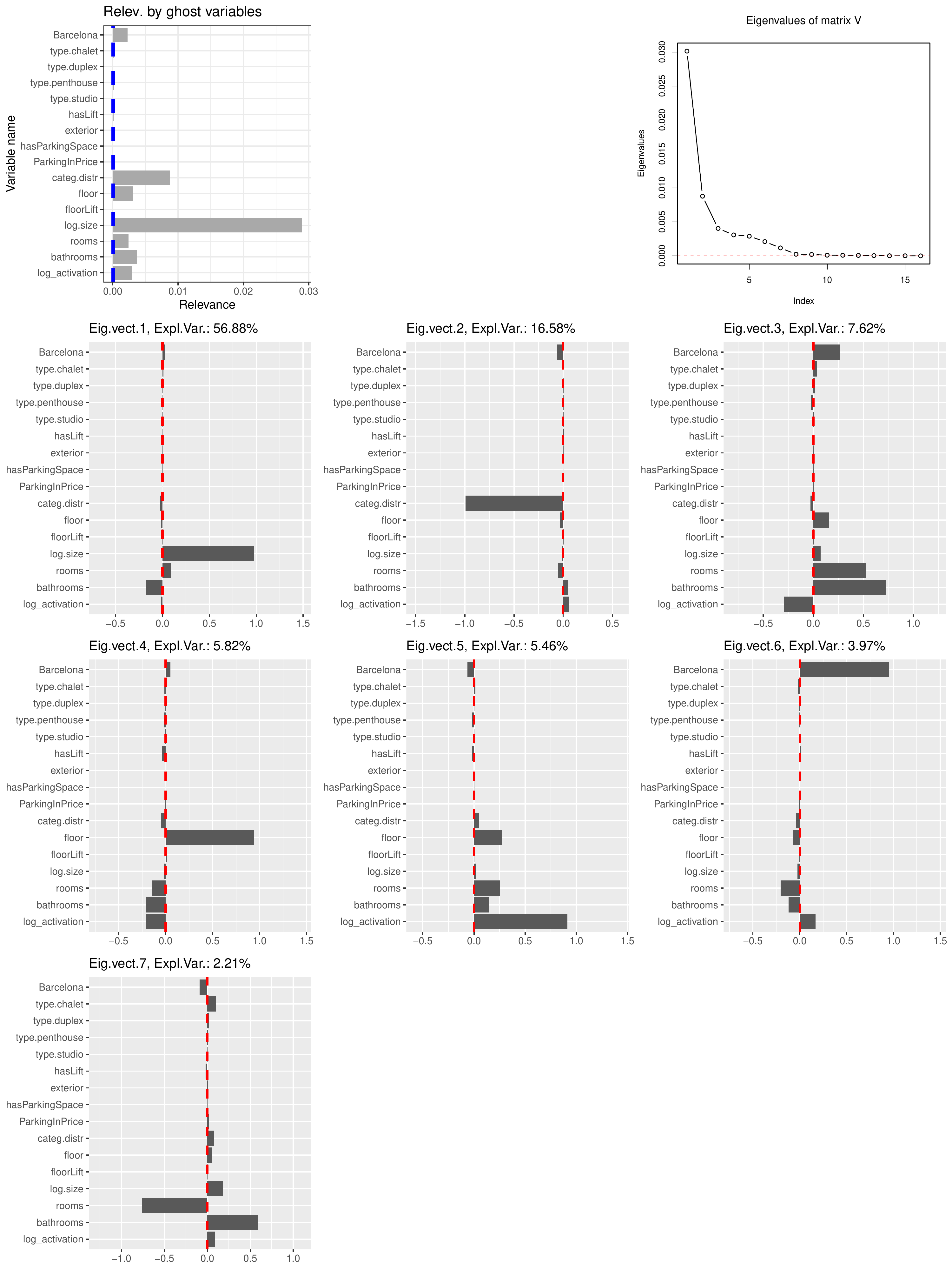}
\end{center}
\caption{Rent housing prices: Relevance by ghost variables for the additive
model. Only the eigenvectors explaining more than 1\% of the total relevance
are plotted. The order of the variables is the same as in the additive model
output in Table \ref{tb:out.am.price}.}%
\label{fig:VarRlevIdealista_gam_Gh}%
\end{figure}

Relevance by random permutations (Figure \ref{fig:VarRlevIdealista_gam_RP})
produces, again, results very similar to those corresponding to the linear model.
\texttt{log.size} is by far the most relevant variable, with
\texttt{bathrooms}, \texttt{categ.distr}, \texttt{rooms} and
\texttt{log\_activation} also having certain relevance. The eigen-structure of
the relevance matrix by random permutations $\tilde{ \mathbf{V} }$ shows that
there are 7 eigenvectors explaining more than 1\% of the total relevance, and
that each of them is mainly associated with one of the 7 most relevant
variables. 
%Those results are similar to those obtained when computing relevance by random permutations for the linear model.

\begin{figure}[p]
\begin{center}
\vspace*{-1cm} \hspace*{-1.5cm}
\includegraphics[scale=.45]{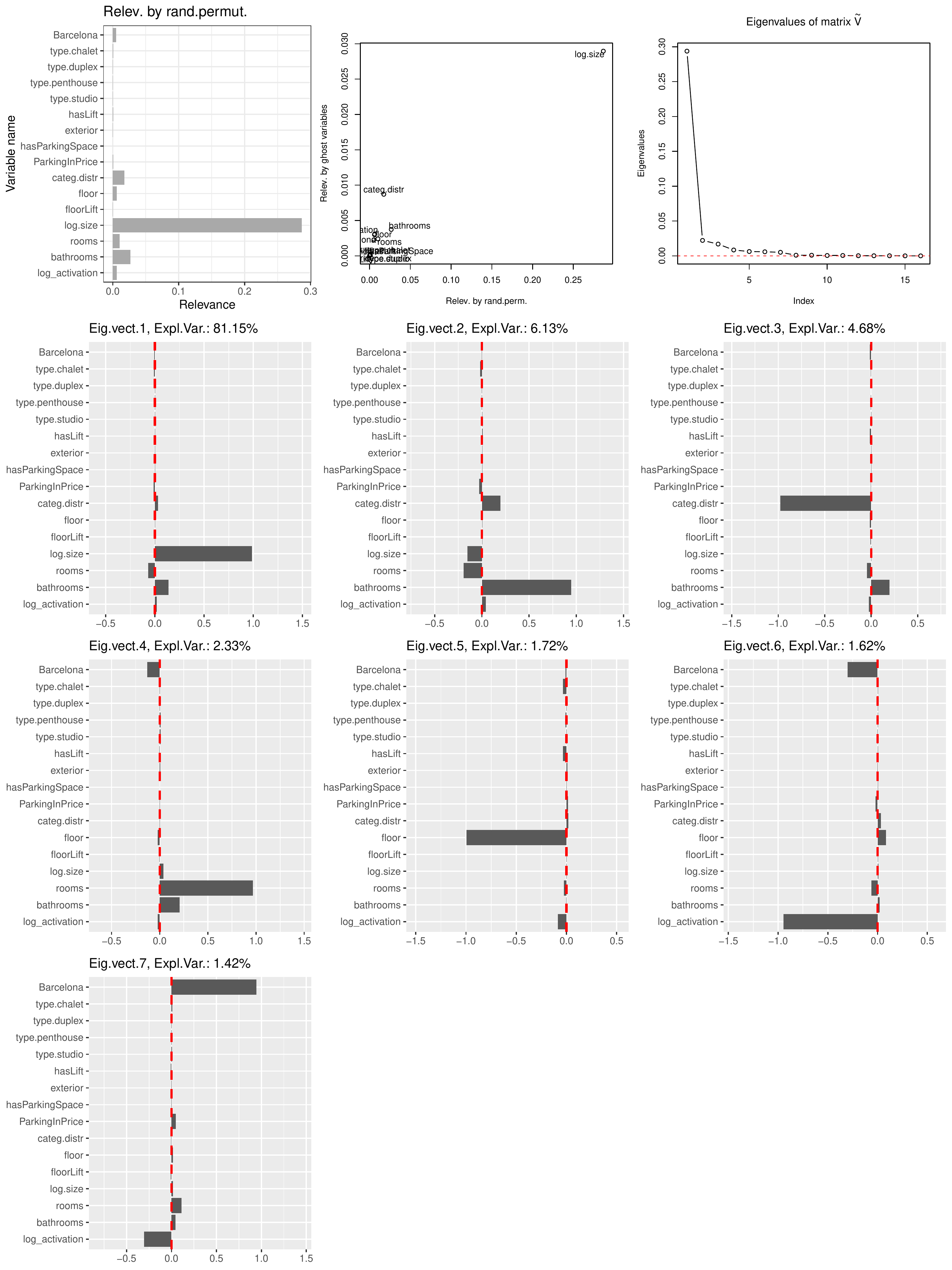}
\end{center}
\caption{Rent housing prices: Relevance by random permutations for the
additive model. Only the eigenvectors explaining more than 1\% of the total
relevance are plotted. The order of the variables is the same as in the
additive model output in Table \ref{tb:out.am.price}}%
\label{fig:VarRlevIdealista_gam_RP}%
\end{figure}

\end{document}